\newcommand{\Op}{\mathrm{Op}}
\newcommand{\OT}{\mathrm{OT}}
\newcommand{\Expect}{\mathbb{E}}
\newcommand{\Var}{\mathrm{Var}}
\newcommand{\ADM}{\mathrm{ADM}}
\newcommand{\Tr}{\mathrm{Tr}}
\newcommand{\E}{\mathbb{E}}
\newcommand{\Probs}{\mathcal{P}}
\newcommand{\N}{\mathcal{N}}
\newcommand{\M}{\mathcal{M}}
\newcommand{\GP}{\mathcal{GP}}
\newcommand{\KL}{D_{\mathrm{KL}}}
\renewcommand{\H}{\mathcal{H}}
\newcommand{\Sym}{\mathrm{Sym}}
\newcommand{\HS}{\mathrm{HS}}
\newcommand{\Ncal}{\mathcal{N}}
\newcommand{\X}{\mathcal{X}}
\newcommand{\R}{\mathbb{R}}
\newcommand{\defeq}{:=}
\newtheorem{remark}{Remark}
\newtheorem{theorem}{Theorem}
\newtheorem{proposition}{Proposition}
\newtheorem{lemma}{Lemma}
\icmltitlerunning{Estimating 2-Sinkhorn Divergence between GPs}
\begin{document}

\twocolumn[
\icmltitle{Estimating 2-Sinkhorn Divergence between Gaussian Processes from Finite-Dimensional Marginals}

% It is OKAY to include author information, even for blind
% submissions: the style file will automatically remove it for you
% unless you've provided the [accepted] option to the icml2020
% package.

% List of affiliations: The first argument should be a (short)
% identifier you will use later to specify author affiliations
% Academic affiliations should list Department, University, City, Region, Country
% Industry affiliations should list Company, City, Region, Country

% You can specify symbols, otherwise they are numbered in order.
% Ideally, you should not use this facility. Affiliations will be numbered
% in order of appearance and this is the preferred way.
\icmlsetsymbol{equal}{*}

\begin{icmlauthorlist}
\icmlauthor{Anton Mallasto}{aalto}
\end{icmlauthorlist}

\icmlaffiliation{aalto}{Department of Computer Science, Aalto University, Finland}

\icmlcorrespondingauthor{Anton Mallasto}{anton.mallasto@aalto.fi}

% You may provide any keywords that you
% find helpful for describing your paper; these are used to populate
% the "keywords" metadata in the PDF but will not be shown in the document
\icmlkeywords{Machine Learning, ICML}

\vskip 0.3in
]

% this must go after the closing bracket ] following \twocolumn[ ...

% This command actually creates the footnote in the first column
% listing the affiliations and the copyright notice.
% The command takes one argument, which is text to display at the start of the footnote.
% The \icmlEqualContribution command is standard text for equal contribution.
% Remove it (just {}) if you do not need this facility.

%\printAffiliationsAndNotice{}  % leave blank if no need to mention equal contribution
\printAffiliationsAndNotice{\icmlEqualContribution} % otherwise use the standard text.

\begin{abstract}
\emph{Optimal Transport} (OT) has emerged as an important computational tool in machine learning and computer vision, providing a geometrical framework for studying probability measures. OT unfortunately suffers from the curse of dimensionality and requires regularization for practical computations, of which the \emph{entropic regularization} is a popular choice, which can be 'unbiased', resulting in a \emph{Sinkhorn divergence}. In this work, we study the convergence of estimating the 2-Sinkhorn divergence between \emph{Gaussian processes} (GPs) using their finite-dimensional marginal distributions. We show almost sure convergence of the divergence when the marginals are sampled according to some base measure. Furthermore, we show that using $n$ marginals the estimation error of the divergence scales in a dimension-free way as $\mathcal{O}\left(\epsilon^ {-1}n^{-\frac{1}{2}}\right)$, where $\epsilon$ is the magnitude of entropic regularization.
\end{abstract}

\section{Introduction}

\emph{Gaussian processes} (GPs) are infinite-dimensional counterparts of normal distributions, which are plentiful in machine learning tasks where one aims to infer functional relationships with uncertainty estimates \citep{hauberg2015random,le2015gpssi,roberts2013gaussian}. Centered GPs correspond to \emph{covariance operators}, which appear as features in computer vision~\citep{faraki2015approximate,harandi2014bregman} and natural language processing \citep{pigoli2014distances}. To compare such operators with each other requires defining a metric or a divergence. Such divergences exist readily for the finite-dimensional covariance matrices~\cite{arsigny2006log,pennec2006riemannian}, but these expressions do not always extend in a straight-forward manner to the infinite-dimensional setting, mainly due to the eigenvalues of covariance operators converging to zero~\cite{haquang2014log,haquang2015affine}. As the covariance operators correspond to centered probability measures, \emph{optimal transport} (OT) can be applied to compute well-defined divergences.

OT~\cite{villani08,peyre2019computational} provides a geometrical toolkit for studying the space of probability measures by extending a cost function between samples of the probabilities to a divergence between the entire probability measures. A special subclass of these divergences is formed by the \emph{Wasserstein metrics}, which are proper metrics, and therefore enjoy favorable topological properties compared to information theoretical divergences, such as the \emph{Kullback-Leibler (KL) divergence}. However, the OT problem requires regularization due to bad scaling properties with respect to the sample dimension, behaving as $\mathcal{O}\left(n^{-\frac{1}{d}}\right)$~\cite{dudley1969speed}. To combat this, a popular approach is to combine OT and information theoretical divergences, by regularizing the OT problem with a divergence term~\cite{dessein18,dimarino20}, e.g. the KL-divergence, resulting in \emph{entropy-regularized OT}~\cite{cuturi13}, improving the sample complexity to $\mathcal{O}\left((1+\epsilon^{-\lfloor d/2\rfloor})n^{-\frac{1}{2}}\right)$~\cite{genevay18sample}, which has been further sharpened to $\mathcal{O}\left(\epsilon\left(1+\frac{\sigma^{\lceil 5d/2 \rceil+6}}{\epsilon^{\lceil 5d/4 \rceil+3}}\right)n^{-\frac{1}{2}}\right)$~\citep{mena2019statistical}.

OT approaches to compute divergences between Gaussians have been studied extensively, as closed form expressions can be derived. The $2$-Wasserstein distance between normal distributions was studied in~\citep{givens84,dowson82,olkin82,knott84}, $2$-Wasserstein distance between GPs in~\citep{gelbrich90,mallasto17,masarotto2019procrustes}. Also the entropy-regularized $2$-Wasserstein distance and the Sinkhorn divergence between normal distributions have garnered considerable attention lately ~\citep{Mallasto2020entropyregularized,Janati2020entropicOT,GerGroGor19,del2020statistical,RipThesis}, and their extensions to Gaussian measures and GPs~\citep{haquang20entropic}.

In this work, we study the complexity of computing the entropy-regularized $2$-Wasserstein distance as well as the $2$-Sinkhorn divergence between two GPs using a finite amount of their marginals. In other words, the question is: given two GPs $f_0,f_1$ over an index set $\X$, over how many samples $\{x_1,...,x_n\}\subseteq \X$ should we evaluate the GPs to get a reasonable approximation for the $2$-Sinkhorn divergence with regularization $\epsilon$ between the GPs?

The contributions can be summarized as follows:
\begin{itemize}
    \item We show that the entropy-regularized 2-Wasserstein distance has a marginal complexity (error rate as a function of marginals) of $\mathcal{O}\left(\frac{1}{\sqrt{n}}\left(\frac{1}{\epsilon}+\mathrm{const.}\right)\right)$ and the $2$-Sinkhorn divergence $\mathcal{O}\left(\frac{1}{\epsilon \sqrt{n}}\right)$.
    \item We provide concentration bounds for the errors of the empirical estimates for the two divergences.
    \item We illustrate empirically the convergence of the estimates, and observe that increasing $\epsilon$ does not necessarily increase estimation accuracy in terms of relative error as we increase $n$. However, if we increase the dimension of $\X$, larger $\epsilon$ also decreases relative error.
\end{itemize}

\section{Background}
In the following, we briefly summarize some prerequisites and fix notation.
\subsection{Gaussian Processes and Covariance Operators}

\paragraph{Gaussian processes.} A \emph{Gaussian process} (GP) $f$ is a collection of random variables, such that any finite restriction $(f(x_i))_{i=1}^n$ has a joint Gaussian distribution, where $x_i\in \X$, and $(\X,\rho)$ is the \emph{index set} equipped with a probability measure $\rho$. We will assume $\X\subset \R^d$ to be compact. A GP is entirely characterized by the pair
\begin{equation}
\begin{aligned}
m(x)&=\E{f(x)},\\
K(x,x')&=\E\left(f(x)-m(x))(f(x')-m(x')\right)~,
\end{aligned}
\end{equation}
where $m$ and $K$ are called the \emph{mean function} and the \emph{covariance function} (or \emph{kernel}), respectively, and denote such a GP by $f\sim \GP(m,K)$. It follows from the definition that the covariance function $K$ is symmetric and positive semidefinite \emph{kernel}. The marginal of $f$ over $X=(x_1,...,x_n)\subseteq \X$ follows the Gaussian distribution $f(X) \sim \N(\mu, \Sigma)$, where $\mu_i = m(x_i)$ and $\Sigma_{ij} = K(x_i,x_j)$.

\paragraph{Covariance operators.} Denote by $\H = L^2(\X,\rho)$ the space of $L^2$-integrable functions from $\X$ to $\mathbb{R}$ under the reference probability measure $\rho$. The kernel $K(\cdot, \cdot)$ has an associated integral operator, denoted by abuse of notation as $K \colon L^2(\X) \to L^2(\X)$, defined by
\begin{equation}
[K\phi](x)=\int_\X K(x,s)\phi(s)d\rho(s), ~\forall \phi\in \H~,
\end{equation}
called the \emph{covariance operator}. The operator $K$ is \emph{self-adjoint} and \emph{positive}, which we denote by $K\in\Sym^+(\H)$, and it is of \emph{trace-class}, denoted by $K\in \Tr(\H)$. That is, the \emph{trace}
\begin{equation}
\Tr \left[K\right] = \sum_{k=1}^\infty \langle K e_k,e_k\rangle_\H,
\end{equation}
is finite, absolutely convergent and independent of the basis $\{e_k\}_{k=1}^\infty$ of $\H$. If the kernel $K$ is continuous, then
\begin{equation}
    \Tr \left[K\right] = \int_\X K(x,x) d\rho(x).
\end{equation}
Associated with the trace is the \emph{trace norm} of an operator
\begin{equation}
    \|K\|_\Tr \defeq \Tr\left[|K|\right]<\infty,
\end{equation}
where $|K|=(K^*K)^\frac{1}{2}$ is the \emph{absolute value} of $K$ and $K^\frac{1}{2}$ is the \emph{operator square-root} of $K$. If $K$ is positive and self-adjoint, then $K=|K|$. The trace norm belongs to the family of \emph{Schatten $p$-norms}, given by
\begin{equation}
    \|A\|_p \defeq \left(\sum_{k=1}^\infty \sigma_k(A)^p\right)^\frac{1}{p} = \left(\Tr \left[|A|^p\right]\right)^\frac{1}{p},
\end{equation}
where $\sigma_i(A)$ denotes the i\textsuperscript{th} largest eigenvalue of $A$. The Schatten $p$-norms are \emph{unitarily invariant}~\cite{Bhatia:1997Matrix}. The \emph{Hilbert-Schmidt norm} $\|A\|_\HS := \|A\|_2$ and the \emph{operator norm} $\|A\|_\Op := \|A\|_\infty$ are two other important examples of Schatten norms. As trace-class operators, covariance operators are also \emph{Hilbert-Schmidt} ($\|A\|_\HS < \infty$) and bounded $(\|A\|_\Op < \infty)$, as Schatten norms satisfy $\|A\|_p \leq \|A\|_q$ for $p\geq q$. For $0<p<1$, $\|A\|_p$ does not produce a norm, but instead a \emph{quasi-norm}.

\paragraph{Reproducing kernel Hilbert spaces (RKHS).} Given a positive-definite kernel $K$, an associated RKHS $(\H, \langle \cdot, \cdot \rangle_\H)$ of bounded functions ($f\in\H$ satisfies $f(x)\leq C_x\|f\|_\H$ for some constant $C_x$) exists with the \emph{reproducing property} $\langle f, K(x,\cdot)\rangle_\H = f(x)$. This existence is given by the well-known Moore-Aronszajn theorem.

\subsection{Optimal Transport}
Let $(\M,d_\M)$ be a metric space equipped with a lower semi-continuous \emph{cost function} $c:\M \times \M \to \mathbb{R}_{\geq 0}$. Then, the optimal transport problem between two probability measures $\nu_0, \nu_1 \in \Probs(X)$ is given by
\begin{equation}
    \OT(\nu_0, \nu_1) \defeq \min_{\gamma\in \ADM(\nu_0,\nu_1)}\E_\gamma[c],
    \label{def:OT}
\end{equation}
where $\ADM(\nu_0,\nu_1)$ is the set of joint probabilities with marginals $\nu_0$ and $\nu_1$, and $\E_\nu[f]$ denotes the expected value of $f$ under $\nu$.

\paragraph{Wasserstein distance.} The $p$-Wasserstein distance $W_p$ between $\nu_0$ and $\nu_1$ is defined as
\begin{equation}
    W_p(\nu_0,\nu_1) \defeq \OT_{d_\M^p}(\nu_0, \nu_1)^{\frac{1}{p}},
\end{equation}
where $d_\M$ is the metric on $\M$ and $p\geq 1$. The case $p=2$ is particularly interesting, as the resulting metric is induced by a pseudo-Riemannian metric structure. One of the rare cases where the $2$-Wasserstein distance admits a closed form solution, is in th Euclidean case ($\M=\R^d$, $d_\M(x,y) = \|x-y\|$) between two multivariate Gaussian distributions $\nu_i=\N(\mu_i,\Sigma_i)$, $i=0,1$ which is given by \cite{givens84,dowson82,olkin82,knott84}
\begin{equation}\label{eq:gaus_was_dist}
\begin{aligned}
    W_2^2(\nu_0, \nu_1) =& \|\mu_0-\mu_1\|^2 + \Tr\left[\Sigma_0\right] + \Tr\left[\Sigma_1\right]\\
    &- 2 \Tr\left[\left(\Sigma_1^\frac{1}{2} \Sigma_0 \Sigma_1^\frac{1}{2}\right)^\frac{1}{2}\right].
\end{aligned}
\end{equation}
When $\mu_0=\mu_1=0$, we get as a special case a distance between the covariance matrices, which we denote by abuse of notation as 
\begin{equation}
    W_2(K_0,K_1) = W_2(\N(0,K_0), \N(0,K_1)).
\end{equation}

\paragraph{Entropic regularization.}
Let $\nu_0, \nu_1 \in \Probs(\M)$ have densities $p_{\nu_0}$ and $p_{\nu_1}$. Then, we denote by
\begin{equation}
    \KL(\nu_0 || \nu_1) \defeq - \E_{\nu_0}\left[\log\frac{p_{\nu_1}}{p_{\nu_1}}\right],
\end{equation}
the \emph{Kullback-Leibler divergence} (KL-divergence) between $\nu_0$ and $\nu_1$. Then, given $\epsilon > 0$, the entropic regularization of \eqref{def:OT}~\cite{cuturi13} is given by
\begin{equation}\label{eq:mainKL}
    \OT_c^\epsilon(\nu_0, \nu_1) \defeq \min_{\gamma\in \ADM(\nu_0,\nu_1)}\left\lbrace\E_\gamma[c]
    + \epsilon \KL(\gamma || \nu_0 \otimes \nu_1) \right\rbrace,
\end{equation}
which yields a strictly convex problem that is numerically more favorable to solve compared to \eqref{def:OT} due to the Sinkhorn-Knopp algorithm. For entropy-regularized Wasserstein distance, we use the notation
\begin{equation}
W_{p,\epsilon}(\nu_0,\nu_1) \defeq \left(\OT_{d_\M^p}^\epsilon(\nu_0,\nu_1)\right)^\frac{1}{p}.
\end{equation}

When $d_\M$ is the Euclidean distance, in the Gaussian special case $\nu_i=\Ncal(\mu_i,\Sigma_i)$ for $i=0,1$, a closed form expression can be derived~\cite{Mallasto2020entropyregularized,Janati2020entropicOT,GerGroGor19,del2020statistical}. Denote by
\begin{equation}
    M_\epsilon(\Sigma_i,\Sigma_j
    ) = -I + \left(I + \frac{16}{\epsilon^2}\Sigma_i^\frac{1}{2}\Sigma_j\Sigma_i^\frac{1}{2}\right)^\frac{1}{2},
\end{equation}
then
\begin{equation}\label{eq:gaus_ent_was_dist}
\begin{aligned}
    W_{2,\epsilon}^2(\nu_0, \nu_1)
    =& \|\mu_0 - \mu_1\|^2
    + \Tr\left[\Sigma_0\right] + \Tr\left[\Sigma_1\right]\\
    &+ \frac{\epsilon}{2}\log \det\left(I + \frac{1}{2} M_\epsilon(\Sigma_0,\Sigma_1)
    \right)\\
    &- \frac{\epsilon}{2}
    \Tr\left[M_\epsilon(\Sigma_0,\Sigma_1)\right]\\
\end{aligned}
\end{equation}
Note, that for computational reasons, it is advantageous to replace $K_0^\frac{1}{2}K_1K_0^\frac{1}{2}$ with $K_0K_1$ in $M_{01}^\epsilon$. The quantity remains the same, as the two matrices have the same eigenvalues. The invariance follows from the trace and determinant being functions of the eigenvalues of the input matrices.

\paragraph{Sinkhorn divergence.} The KL-divergence term in $\OT_c^\epsilon$ acts as a bias, as discussed in~\citep{feydy18}. This can be removed by defining the \emph{p-Sinkhorn divergence}~\citep{genevay17} as
\begin{equation}\label{def:sinkhorn}
    S_{p,\epsilon}(\nu_0, \nu_1) \defeq W_{p,\epsilon}^p(\nu_0, \nu_1) - \frac{1}{2}(W_{p,\epsilon}^p(\nu_0,\nu_0) + W_{p,\epsilon}^p(\nu_1,\nu_1) ).
\end{equation}

Especially, we have that $\lim\limits_{\epsilon\rightarrow 0}S_{2,\epsilon}(\nu_0, \nu_1) = W_2^2(\nu_0, \nu_1)$, and $\lim\limits_{\epsilon\rightarrow \infty} S_{2,\epsilon}(\nu_0, \nu_1) = \|\mu_0-\mu_1\|^2$. where $\mu_i$ is the mean of $\nu_i$. Therefore, the Sinkhorn divergence interpolates between OT and maximum mean discrepancy (MMD).

\paragraph{Extensions to GPs.} The $2$-Wasserstein distance and its entropic-regularization between normal distributions extend to the infinite-dimensional GP case~\cite{gelbrich90,mallasto17,haquang20entropic} readily by substituting the mean vector with the mean function and the covariance matrix with the covariance operator in \eqref{eq:gaus_was_dist} and \eqref{eq:gaus_ent_was_dist}, respectively. Thus for the entropic case, given two GPs $f_i\sim\GP(m_i, K_i)$, $i=0,1$, defined over an index space $(\X,\rho)$, the entropy-regularized $2$-Wasserstein distance between their kernels is given by
    \begin{equation}\label{eq:op_ent_was}
    \begin{aligned}
    W_{2,\epsilon}^2(K_0,K_1) =& \Tr\left[K_0\right] + \Tr\left[K_1\right]\\
    & + \frac{\epsilon}{2}\log \det\left(I + \frac{1}{2} M_\epsilon(K_0,K_1)\right)\\
    &- \frac{\epsilon}{2}
    \Tr\left[M_\epsilon(K_0,K_1)\right],
    \end{aligned}
    \end{equation}
and between the GPs
    \begin{equation}\label{eq:gp_ent_was}
    W_{2,\epsilon}^2(f_0,f_1) = \|m_0-m_1\|^2_{L^2(X,\rho)} + W_{2,\epsilon}^2(K_0,K_1).
    \end{equation}
The $2$-Sinkhorn divergences between the covariance operators and GPs can then be computed with \eqref{def:sinkhorn}.

\section{Marginal Complexity of Sinkhorn Divergence for Covariance Operators}
In this section we derive our main results on the complexity of computing the $2$-Sinkhorn divergence between two GPs using $n$ marginals. First we study the continuity of the entropy-regularized $2$-Wasserstein distance in Sec.~\ref{sec:continuity}. In Sec.~\ref{sec:empirical} we discuss empirical representations of the covariance operators, and in Sec.~\ref{sec:convergence} we show that the estimates computed in the finite-dimensional setting converge almost surely to the quantities in the infinite-dimensional setting. Finally, in Sec.~\ref{sec:marginal_complex} we provide results on the rate of convergence in the expected case (which we call \emph{marginal complexity}, cf. sample complexity) as well as concentration bounds for the estimation error. 

 We focus on the geometry induced on the covariance operators, as the Euclidean geometry $L^2(\X,\rho)$ of the mean functions is well-known. To briefly  recap, note that the means of the two GPs contribute $\|m_0-m_1\|_{L^2(\X,\rho)}^2$ in \eqref{eq:gp_ent_was}. This can be estimated by Monte Carlo integration~\cite{weinzierl2000introduction}, sampling $x_i$, $i=1,...,n$, I.I.D. from $\rho$ and estimating
\begin{equation}
    \begin{aligned}
    \|m_0-m_1\|_{L^2(\X,\rho)}^2 &= \int_\X \|m_0(x)-m_1(x)\|^2 d\rho(x)\\
    &\approx \frac{1}{n}\sum_{k=1}^n \|m_0(x_i)-m_1(x_i)\|^2,
    \end{aligned}
\end{equation}
whose error converges as $\mathcal{O}\left(\frac{1}{\sqrt{n}}\right)$ in expectation.

\subsection{Continuity Results}\label{sec:continuity}
We now focus on the continuity of \eqref{eq:op_ent_was}, which forms the backbone in Sec. \ref{sec:convergence} and \ref{sec:marginal_complex} for deriving error bounds and showing convergence of estimates computed using a finite amount of marginals of the GPs.

Start by splitting $W_{2,\epsilon}^2$ into two parts
\begin{equation}
    W_{2,\epsilon}^2(K_0,K_1) = \Tr \left[K_0 + K_1\right] - \frac{\epsilon}{2} F\left(\frac{16}{\epsilon^2}K_0K_1\right),
\end{equation}
where
\begin{equation}\label{eq:def_F}
\begin{aligned}
    F(A) \defeq&  \Tr\left[-I+(I+A)^\frac{1}{2}\right]\\
    &- \log\det\left(\frac{1}{2} + \frac{1}{2}(I+A)^\frac{1}{2}\right).
\end{aligned}
\end{equation}
The trace term is quite trivial, and we can study it without deriving any continuity results. In order to tackle the non-trivial $F$, we first introduce Kato's theorem.

\begin{theorem}[\citet{kato1987variation}]\label{thm:kato}
Let $\H$ be a separable Hilbert space, with $A,B$ self-adjoint compact operators. Let $\hat{\lambda}_i$ be an enumeration of eigenvalues of $A-B$. Then there exists extended (by zeros) enumerations $\lambda_i$ and $\lambda_i'$ of eigenvalues of $A$ and $B$, respectively, so that
\begin{equation}
\sum_i \phi(\lambda_i - \lambda_i') \leq \sum_i \phi(\hat{\lambda}_i),
\end{equation}
where $\phi$ is a non-negative convex function with $\phi(0)=0$.
\end{theorem}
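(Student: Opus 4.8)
The plan is to reduce the statement to a finite-dimensional majorization inequality and then pass to the limit by truncation. The observation driving everything is that, for compact self-adjoint operators, the entire content is captured by Lidskii's eigenvalue-perturbation theorem (eigenvalue differences, taken with the natural decreasing-order pairing, are majorized by the eigenvalues of the difference operator), combined with the Hardy--Littlewood--P\'olya / Karamata inequality relating majorization to sums of convex functions.

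First I would settle the finite-dimensional case. Let $A,B$ be $N\times N$ Hermitian with $C=A-B$, and enumerate the eigenvalues of $A$, $B$, $C$ in decreasing order as $\alpha_1\ge\cdots\ge\alpha_N$, $\beta_1\ge\cdots\ge\beta_N$, $\mu_1\ge\cdots\ge\mu_N$. I would choose the pairing that matches the $i$-th largest eigenvalue of $A$ with the $i$-th largest of $B$, i.e. $\lambda_i=\alpha_i$, $\lambda_i'=\beta_i$. Lidskii's theorem states precisely that the vector $(\alpha_i-\beta_i)_i$ is majorized by $(\mu_i)_i$. Since $\phi$ is convex, the Hardy--Littlewood--P\'olya inequality then yields $\sum_i \phi(\alpha_i-\beta_i)\le\sum_i \phi(\mu_i)$, which is exactly the claim in finite dimensions; note that here $\Tr A-\Tr B=\Tr C$, so we have genuine (not merely weak) majorization and the inequality holds for every convex $\phi$.

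Second I would pass to infinite dimensions by truncation. Choose an increasing sequence of finite-rank orthogonal projections $P_n\to I$ strongly, and set $A_n=P_nAP_n$, $B_n=P_nBP_n$. Because compression commutes with subtraction, $A_n-B_n=P_n(A-B)P_n$, so the finite-dimensional result applies on the range of $P_n$ and produces, with eigenvalues $\alpha_i^{(n)}$, $\beta_i^{(n)}$, $\mu_i^{(n)}$, the inequality $\sum_i \phi(\alpha_i^{(n)}-\beta_i^{(n)})\le\sum_i \phi(\mu_i^{(n)})$. Since $A$ is compact and $P_n\to I$ strongly, one gets operator-norm convergence $P_nAP_n\to A$ (and likewise for $B$ and $A-B$), hence convergence of each eigenvalue to the corresponding eigenvalue of the limiting operator. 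The zero-padding in the statement is exactly what makes this consistent: because $\phi(0)=0$, appending zero eigenvalues leaves every partial sum unchanged, so the finite truncations and the infinite limit are compared on the same footing.

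The main obstacle is the limiting step rather than the algebra. I expect the delicate points to be (i) organizing the enumeration so that the decreasing-order pairing survives in the limit, given that a compact self-adjoint operator has positive eigenvalues decreasing to $0$ and negative eigenvalues increasing to $0$ with no well-defined ``global'' decreasing list, which is precisely where the zero-padding is forced on us; and (ii) justifying passage to the limit in the possibly infinite sums $\sum_i \phi(\mu_i^{(n)})$. For (ii) one may assume without loss of generality that the right-hand side $\sum_i \phi(\hat\lambda_i)$ is finite (otherwise the inequality is trivial), and then use $\hat\lambda_i\to 0$ together with $\phi(0)=0$ and monotonicity of the compressed eigenvalues to dominate the terms and pass the inequality through $n\to\infty$. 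Producing a single pairing valid in the limit, rather than one depending on $n$, can be handled by a diagonal extraction over the countable index set, and this is where most of the technical care is concentrated.
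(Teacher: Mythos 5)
This statement is quoted from \citet{kato1987variation}; the paper itself supplies no proof, so there is no internal argument to compare yours against. Your route---Lidskii--Wielandt majorization plus Hardy--Littlewood--P\'olya in finite dimensions, then truncation by finite-rank projections---is a sound and fairly standard way to reach Kato's inequality, and the finite-dimensional half is complete as written (in particular, the remark that $\Tr A-\Tr B=\Tr(A-B)$ upgrades weak majorization to genuine majorization, so that all convex $\phi$ are admissible, is exactly the right point). Two comments on the limiting step, which is where the substance lies. First, the diagonal extraction you anticipate is not needed: the decreasing-order pairing on the range of $P_n$ stabilizes, for each fixed index counted from the top (respectively from the bottom), to the canonical pairing that matches the $k$-th largest positive eigenvalue of $A$ with the $k$-th largest positive eigenvalue of $B$, and the $k$-th most negative with the $k$-th most negative, padding with zeros whenever one list runs out; this single enumeration, independent of $\phi$, is precisely the one whose existence the theorem asserts. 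Second, the right-hand side requires no domination or limit-interchange argument: by the min--max principle the eigenvalues of the compression $P_n(A-B)P_n$ are dominated by those of $A-B$ (each positive one from above, each negative one from below), and since $\phi\geq 0$, $\phi(0)=0$ and $\phi$ is convex---hence non-increasing on $(-\infty,0]$ and non-decreasing on $[0,\infty)$---this gives the uniform bound $\sum_i\phi(\mu_i^{(n)})\leq\sum_i\phi(\hat{\lambda}_i)$ for every $n$; the left-hand side is then handled by a $\liminf$ over any fixed finite set of indices using the continuity of $\phi$. With those two points made precise your argument closes, and it stands as a legitimate alternative to Kato's original derivation.
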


With Kato's inequality, we can now show Lipschitzness.

\begin{proposition}\label{prop:total_continuity}
    $F$ is $\frac{1}{4}$-Lipschitz in the trace norm.
\end{proposition}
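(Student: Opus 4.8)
The plan is to recognize $F$ as a spectral (trace) function of a single scalar function, prove a one-dimensional Lipschitz estimate, and then use Kato's inequality to handle the eigenvalue bookkeeping. Writing the eigenvalues of $A$ as $a_i$, equation~\eqref{eq:def_F} expresses $F$ entirely through the spectrum,
\begin{equation}
    F(A) = \Tr[f(A)] = \sum_i f(a_i), \quad f(x) \defeq -1 + \sqrt{1+x} - \log\left(\tfrac{1}{2} + \tfrac{1}{2}\sqrt{1+x}\right),
\end{equation}
where $f$ is applied by functional calculus. The crucial observation here is that $f(0) = 0$, so extending the eigenvalue list of a trace-class operator by zeros leaves the (absolutely convergent, basis-independent) sum unchanged.

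First I would establish the scalar estimate. Since the operators of interest are positive (the eigenvalues of $\tfrac{16}{\epsilon^2}K_0K_1$ coincide with those of the positive operator $\tfrac{16}{\epsilon^2}K_0^{1/2}K_1K_0^{1/2}$), it suffices to bound $f'$ on $[0,\infty)$. A direct differentiation gives
\begin{equation}
    f'(x) = \frac{1}{2\left(1 + \sqrt{1+x}\right)},
\end{equation}
which is positive and, since $\sqrt{1+x} \ge 1$ for $x \ge 0$, bounded above by $\tfrac{1}{4}$, with equality at $x = 0$. By the mean value theorem, $|f(x) - f(y)| \le \tfrac{1}{4}|x - y|$ for all $x,y \ge 0$, i.e.\ $f$ is $\tfrac{1}{4}$-Lipschitz.

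The remaining, and main, difficulty is that one cannot bound $F(A) - F(B) = \sum_i f(a_i) - \sum_i f(b_i)$ termwise, because there is no canonical pairing of the eigenvalues of $A$ with those of $B$, and the eigenvalues of $A - B$ are not the differences $a_i - b_i$. This is exactly what Theorem~\ref{thm:kato} resolves. Applying it with the non-negative convex function $\phi(t) = |t|$ (which satisfies $\phi(0)=0$) yields enumerations $\lambda_i$ of the spectrum of $A$ and $\lambda_i'$ of the spectrum of $B$, each extended by zeros, with
\begin{equation}
    \sum_i |\lambda_i - \lambda_i'| \le \sum_i |\hat\lambda_i| = \|A - B\|_\Tr,
\end{equation}
the last equality being that the trace norm of a self-adjoint operator is the sum of the absolute values of its eigenvalues. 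Using $f(0)=0$ to identify the zero-extended lists with $F(A)$ and $F(B)$, the triangle inequality together with the scalar Lipschitz bound gives
\begin{equation}
    |F(A) - F(B)| \le \sum_i |f(\lambda_i) - f(\lambda_i')| \le \frac{1}{4}\sum_i |\lambda_i - \lambda_i'| \le \frac{1}{4}\|A-B\|_\Tr,
\end{equation}
which is the claim. The points requiring care are the self-adjointness and compactness hypotheses needed to invoke Kato's theorem, and the verification that the zero-extension is consistent with the trace functional; both are handled by positivity and the identity $f(0)=0$.
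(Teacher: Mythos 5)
Your proof is correct and follows essentially the same route as the paper's: both reduce $F(A)-F(B)$ to a sum of scalar differences over paired eigenvalue enumerations, bound the scalar function's derivative by $\tfrac{1}{4}$, and invoke Kato's theorem with $\phi(t)=|t|$ to control the pairing by $\|A-B\|_\Tr$. Your version is marginally more careful in noting $f(0)=0$ (which justifies the zero-extension of the enumerations) and in observing that the spectrum of $K_0K_1$ matches that of the positive operator $K_0^{1/2}K_1K_0^{1/2}$, but these are refinements of, not departures from, the paper's argument.
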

\begin{proof}
Let $A,B$ be covariance operators, and let $\{\lambda_i\}_{i\geq 0}$ and $\{\lambda_j'\}_{j\geq 0}$ be the (positive) eigenvalues of $A$ and $B$, respectively, and $\{\hat{\lambda}_k\}_{k\geq 0}$ be the eigenvalues of $A-B$. Then, by a straight-forward application of the triangle inequality, we get
\begin{equation}\label{eq:bound_rest}
    \begin{aligned}
    &\left| F(A)- F(B) \right|\\
    \leq& \inf\limits_\sigma \sum_i\left|\left((1+\lambda_i)^\frac{1}{2}
    -\log\left(1+(1+\lambda_i)^\frac{1}{2}\right)
    \right)\right.\\
    &-\left.\left((1+\lambda_{\sigma(i)}')^\frac{1}{2}
    -\log\left(1+(1+\lambda_{\sigma(i)}')^\frac{1}{2}\right)\right)
    \right|,\\
    \end{aligned}
\end{equation}
where $\sigma$ is a permutation of the indices. Now, looking at the function 
\begin{equation}
f(x) = x^\frac{1}{2} - \log(1+x^\frac{1}{2}),~ x\geq 1,
\end{equation}
we find it has the derivative
\begin{equation}
    f'(x) = \frac{1}{2(1+x^\frac{1}{2})},
\end{equation}
and so its growth can be bounded by
\begin{equation}\label{eq:real_bound_rest}
\left|f(x) - f(y) \right| \leq \sup\limits_{z\geq 1} \left|f'(z)\right| \left|x-y\right|\leq \frac{1}{4}\left|x-y\right|.
\end{equation}
Now applying \eqref{eq:real_bound_rest} and Theorem~\ref{thm:kato} to \eqref{eq:bound_rest} yields
\begin{equation}
\begin{aligned}
&\inf\limits_\sigma \sum_i \left|f(1+\lambda_i) - f(1+\lambda_{\sigma(i)}')\right|\\
\leq& \frac{1}{4}\inf \limits_\sigma \sum_i \left|\lambda_i - \lambda_{\sigma(i)}'\right| \\
\leq& \frac{1}{4}\sum_i \left|\hat{\lambda}_i\right| 
= \frac{1}{4}\|A-B\|_\Tr.
\end{aligned}
\end{equation}
\end{proof}

\begin{remark}\label{remark:sqrt_lipschitz}
Showing continuity similarly for the vanilla $2$-Wasserstein distance between covariance operators would fail, essentially as $x^\frac{1}{2}$, $x\geq 0$ is not Lipschitz. In contrast, entropic regularization provides us with smoothness in form of Lipschitzness and allows for using the theoretical machinery below to bound the error rate.
\end{remark}

However, the trace norm is difficult to work with, as it lacks smoothness. For example, known Hoeffding type bounds in Banach spaces require certain smoothness from the associated norm, which does not hold for the trace norm~\citep{pinelis1994optimum}. The Hilbert-Schmidt norm, resulting from an inner-product, is preferable, motivating the following bound.
\begin{lemma}\label{lem:tr_norm_bound}
Let $A,B,A',B'$ be positive, self-adjoint trace-class operators. Then,
\begin{equation}
    \|AB-A'B'\|_\Tr \leq \|B\|_\HS\|A-A'\|_\HS + \|A'\|_\HS\|B-B'\|_\HS.
\end{equation}
\end{lemma}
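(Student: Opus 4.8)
The plan is to reduce everything to two standard facts: the triangle inequality for the trace norm and the noncommutative Hölder (Cauchy--Schwarz) inequality $\|XY\|_\Tr \leq \|X\|_\HS\|Y\|_\HS$ for Schatten norms, which is the $p=q=2$ case of Hölder's inequality for Schatten classes and holds for operators on a separable Hilbert space (see, e.g., \cite{Bhatia:1997Matrix}). Since $A,B,A',B'$ are trace-class and hence Hilbert--Schmidt, all the products appearing below are genuinely trace-class, so the norms are finite and the manipulations are justified.

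The first step is the elementary ``add and subtract'' decomposition
\begin{equation}
AB - A'B' = AB - A'B + A'B - A'B' = (A-A')B + A'(B-B').
\end{equation}
Applying the triangle inequality for $\|\cdot\|_\Tr$ then gives
\begin{equation}
\|AB-A'B'\|_\Tr \leq \|(A-A')B\|_\Tr + \|A'(B-B')\|_\Tr.
\end{equation}

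The second step is to bound each of these two product terms by Hölder's inequality with $X=A-A'$, $Y=B$ in the first term and $X=A'$, $Y=B-B'$ in the second, obtaining $\|(A-A')B\|_\Tr \leq \|A-A'\|_\HS\|B\|_\HS$ and $\|A'(B-B')\|_\Tr \leq \|A'\|_\HS\|B-B'\|_\HS$; using the symmetry of the two Hilbert--Schmidt factors in the first product recovers the stated ordering. Summing the two bounds yields exactly the claimed inequality.

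I do not expect a genuine obstacle here; the only point requiring a little care is that this is an infinite-dimensional statement, so rather than relying on a finite-dimensional matrix argument I would invoke Hölder's inequality in the Schatten-class form valid for compact operators on $\H$, which presupposes that $A-A'$, $B$, $A'$, and $B-B'$ are Hilbert--Schmidt. This is guaranteed because trace-class operators are Hilbert--Schmidt (as noted earlier, $\|\cdot\|_\HS \leq \|\cdot\|_\Tr$), so the hypotheses of the lemma make the inequality directly applicable.
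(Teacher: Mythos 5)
Your proof is correct and follows essentially the same route as the paper: the add-and-subtract decomposition $AB-A'B'=(A-A')B+A'(B-B')$, the triangle inequality for the trace norm, and the Schatten--Hölder inequality with $p=q=2$. The extra remarks about all factors being Hilbert--Schmidt are a welcome (if minor) addition of rigor that the paper leaves implicit.
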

\begin{proof}
\begin{equation}
\begin{aligned}
    &\|AB-A'B'\|_\Tr\\
    =& \|AB - A'B + A'B - A'B'\|_\Tr\\
    \leq& \|(A-A')B\|_\Tr + \|A'(B-B')\|_\Tr\\
    \leq& \|B\|_\HS\|A-A'\|_\HS + \|A'\|_\HS\|B-B'\|_\HS.\\
\end{aligned}
\end{equation}
where we first use the trianlge inequality and then Hölder's inequality for Schatten norms (see e.g. \citep[Thm. 2.8]{simon2005trace}): for bounded operators $M,N$, any $p\in[1,\infty]$, and  $q$ so that $p^{-1}+q^{-1}=1$, the inequality
\begin{equation}
    \|MN\|_\Tr \leq \|M\|_p\|N\|_q,
\end{equation}
holds. We use the special case $p=q=2$.
\end{proof}

\subsection{Estimator for Entropic Wasserstein}\label{sec:empirical}
Before discussing how to estimate $W_{2,\epsilon}^2(K_0,K_1)$, we set the assumptions and some short-hand notation for the rest of this work.

\paragraph{Assumptions and notation.} For the following, let $K_i$ be positive and self-adjoint covariance operators over $L^2(\X,\rho)$, where $\X$ is assumed to be compact, and the kernels $K_i$ continuous. Furthermore,  let $X=(x_1,...,x_n) \subset \X$ be I.I.D. samples from $\rho$, let $\kappa_i = \sup\limits_{x,y}|K_i(x,y)|$, and define $K^x := K(\cdot, x)$.

\paragraph{Empirical operators.} Define the following operators
\begin{equation}
\begin{aligned}
T_{K} : f &\mapsto \int \langle f, K^{x} \rangle_\H K^{x} d\rho(x),\\
T_K^{(n)} : f &\mapsto \frac{1}{n}\sum_{i=1}^n \langle f, K^{x_i} \rangle_\H K^{x_i},\\
\left(\Sigma_K^{(n)}\right)_{ij} &= \frac{1}{n}K(x_i,x_j),\\
\end{aligned}
\end{equation}

where $T_K$ is a version of the operator $K$, with range and domain being the RKHS $(\H,\langle \cdot, \cdot \rangle)$ associated with $K$. Then, $T_K$ and $K$ share the same spectra up to zeros~\citep[Prop. 8]{rosasco10}. The operator $T_K^{(n)}$ is an empirical version of $T_K$, which we can represent in matrix form using $\Sigma_K^{(n)}$ in practical computations, as these two also share the same spectra~\citep[Prop. 9]{rosasco10}.

To define an empirical estimator of \eqref{eq:op_ent_was}, we first, by abuse of the notation introduced in~\eqref{eq:def_F}, define
\begin{equation}\label{eq:empirical_F}
\begin{aligned}
    F(K_0,K_1) &\defeq F\left(\frac{16}{\epsilon^2}K_0K_1\right)\\
    F(K_0,K_1,n) &\defeq F\left(\frac{16}{\epsilon^2} \Sigma_{K_0}^{(n)} \Sigma_{K_1}^{(n)}\right).
\end{aligned}
\end{equation}
Using this notation, the estimator for \eqref{eq:op_ent_was} is
\begin{equation}\label{eq:emp_op_ent_was}
    W^2_{2,\epsilon}(K_0,K_1,n) \defeq \Tr\left[ \Sigma_{K_0}^{(n)} + \Sigma_{K_1}^{(n)}\right] - \frac{\epsilon}{2} F\left(K_0, K_1, n\right),
\end{equation}
and by extension, the resulting empirical Sinkhorn divergence is denoted by $S_{2,\epsilon}(K_0,K_1,n)$. 

We start by bounding the difference between $W_{2,\epsilon}^2(K_0,K_1)$ and its estimator.
\begin{proposition}\label{prop:entropic_W_bound}
We have the upper bound
\begin{equation}\label{eq:entropic_W_bound}
\begin{aligned}
&|W_{2,\epsilon}^2(K_0,K_1)-W_{2,\epsilon}^2(K_0,K_1,n)|\\
\leq & |\Tr[T_{K_0} + T_{K_1}] - \Tr[T_{K_0}^{(n)}+T_{K_1}^{(n)}]|\\
& + \frac{2}{\epsilon}\left(\kappa_1\|T_{K_0}-T_{K_0}^{(n)}\|_\HS + \kappa_0\|T_{K_1}-T_{K_1}^{(n)}\|_\HS\right)
\end{aligned}
\end{equation}
\end{proposition}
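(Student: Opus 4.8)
The plan is to bound the difference term by term using the decomposition of $W_{2,\epsilon}^2$ into its trace part and its $F$ part. Writing $W_{2,\epsilon}^2(K_0,K_1) = \Tr[K_0+K_1] - \tfrac{\epsilon}{2}F(K_0,K_1)$ and the estimator as $\Tr[\Sigma_{K_0}^{(n)}+\Sigma_{K_1}^{(n)}] - \tfrac{\epsilon}{2}F(K_0,K_1,n)$, I would first apply the triangle inequality to split the absolute difference into a trace part and an $F$ part. The trace part is immediate once I recall that $T_{K_i}$ and $\Sigma_{K_i}^{(n)}$ share the spectra of $K_i$ and $T_{K_i}^{(n)}$ respectively (via Prop. 8 and 9 of \citet{rosasco10}), so $\Tr[\Sigma_{K_i}^{(n)}] = \Tr[T_{K_i}^{(n)}]$ and $\Tr[K_i] = \Tr[T_{K_i}]$; this yields exactly the first line on the right-hand side of \eqref{eq:entropic_W_bound}.

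For the $F$ part I would bound $\tfrac{\epsilon}{2}|F(K_0,K_1) - F(K_0,K_1,n)|$. Here I invoke Proposition~\ref{prop:total_continuity}, which gives that $F$ is $\tfrac{1}{4}$-Lipschitz in the trace norm, applied to the arguments $\tfrac{16}{\epsilon^2}K_0K_1$ and $\tfrac{16}{\epsilon^2}\Sigma_{K_0}^{(n)}\Sigma_{K_1}^{(n)}$. Since the eigenvalues of $\Sigma_{K_0}^{(n)}\Sigma_{K_1}^{(n)}$ coincide with those of $T_{K_0}^{(n)}T_{K_1}^{(n)}$, I may equivalently compare $\tfrac{16}{\epsilon^2}K_0K_1$ (spectrally $T_{K_0}T_{K_1}$) against $\tfrac{16}{\epsilon^2}T_{K_0}^{(n)}T_{K_1}^{(n)}$; because $F$ depends only on the spectrum, the trace-norm distance between these operator products controls the difference. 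This produces a factor $\tfrac{\epsilon}{2}\cdot\tfrac14\cdot\tfrac{16}{\epsilon^2} = \tfrac{2}{\epsilon}$ out front, matching the prefactor in the claimed bound.

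It then remains to bound $\|T_{K_0}T_{K_1} - T_{K_0}^{(n)}T_{K_1}^{(n)}\|_\Tr$, and this is exactly where Lemma~\ref{lem:tr_norm_bound} enters. Setting $A = T_{K_0}$, $B = T_{K_1}$, $A' = T_{K_0}^{(n)}$, $B' = T_{K_1}^{(n)}$, the lemma gives
\begin{equation*}
\|T_{K_0}T_{K_1} - T_{K_0}^{(n)}T_{K_1}^{(n)}\|_\Tr \leq \|T_{K_1}\|_\HS\|T_{K_0}-T_{K_0}^{(n)}\|_\HS + \|T_{K_0}^{(n)}\|_\HS\|T_{K_1}-T_{K_1}^{(n)}\|_\HS.
\end{equation*}
The final ingredient is to replace the Hilbert-Schmidt norms $\|T_{K_1}\|_\HS$ and $\|T_{K_0}^{(n)}\|_\HS$ by the uniform kernel bounds $\kappa_1$ and $\kappa_0$; this should follow from the boundedness of the kernels on the compact $\X$ together with the reproducing property, giving $\|T_{K_i}\|_\HS \leq \kappa_i$ and the analogous bound for the empirical operator.

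I expect the main obstacle to be the bookkeeping around the spectral identifications: one must argue carefully that replacing the matrix products $\Sigma_{K_0}^{(n)}\Sigma_{K_1}^{(n)}$ and the operator $K_0K_1$ by the RKHS operators $T_{K_0}^{(n)}T_{K_1}^{(n)}$ and $T_{K_0}T_{K_1}$ leaves $F$ unchanged, which requires that products of operators sharing spectra with the originals also share the relevant spectrum — a subtlety since equal individual spectra do not in general force equal spectra of products. The cleanest route is to note that $F$ is defined purely through the trace and log-determinant, hence is a spectral function, and that the relevant spectra genuinely coincide by the cited results of \citet{rosasco10}; establishing that the operator $\|\cdot\|_\HS$ norms are correctly bounded by $\kappa_i$ is the other point needing care, but it reduces to controlling $\sup_{x,y}|K_i(x,y)|$ on the compact index set.
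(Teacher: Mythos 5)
Your proposal is correct and follows essentially the same route as the paper's proof: triangle inequality to separate the trace and $F$ terms, spectral identification via \citet{rosasco10} to pass to $T_{K_i}$ and $T_{K_i}^{(n)}$, Proposition~\ref{prop:total_continuity} for the $\frac{2}{\epsilon}$ prefactor, and Lemma~\ref{lem:tr_norm_bound} with $\|T_{K_i}\|_\HS \leq \kappa_i$, $\|T_{K_i}^{(n)}\|_\HS \leq \kappa_i$ to finish. The spectral subtlety you flag about products is real but is handled (indeed, somewhat tersely asserted) the same way in the paper, so there is nothing further to add.
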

\begin{proof}
Using triangle inequality, we get
\begin{equation}\label{eq:entropic_W_bound1}
    \begin{aligned}
    &\left|W_{2,\epsilon}^2(K_0,K_1)-W_{2,\epsilon}^2(K_0,K_1,n)\right|\\
    \leq & \left|\Tr[T_{K_0} + T_{K_1}] - \Tr\left[T_{K_0}^{(n)}+T_{K_1}^{(n)}\right]\right|\\
    & + \frac{\epsilon}{2}\left|F\left(K_0,K_1\right)- F\left(K_0,K_1,n\right)\right|.
    \end{aligned}
\end{equation}
Now focus on the second term in \eqref{eq:entropic_W_bound1}. Note that we can replace $K_0,K_1$ with $T_{K_0},T_{K_1}$ without changing the quantities, as these operators have the same spectra up to zeros. The same holds when replacing $\Sigma_{K_0}^{(n)},\Sigma_{K_1}^{(n)}$ with $T_{K_0},T_{K_1}$. Then, applying Proposition~\ref{prop:total_continuity} and Lemma~\ref{lem:tr_norm_bound} with the bounds $\|T_{K_i}\|_\HS\leq \kappa_i$, $\|T_{K_i}^{(n)}\|_\HS \leq \kappa_i$, we get
\begin{equation}
\begin{aligned}
 &\frac{\epsilon}{2}\left|F\left(K_0,K_1\right)- F\left(K_0,K_1,n\right)\right|\\
=&\frac{\epsilon}{2}\left|F\left(\frac{16}{\epsilon^2}K_0K_1\right)- F\left(\frac{16}{\epsilon^2}\Sigma_{K_0}^{(n)}\Sigma_{K_1}^{(n)}\right)\right|\\
\leq &\frac{2}{\epsilon}\left\|T_{K_0}T_{K_1} - T^{(n)}_{K_0}T^{(n)}_{K_1}\right\|_\Tr\\
\leq & \frac{2}{\epsilon}\left(
\kappa_1\left\|T_{K_0} - T^{(n)}_{K_0}\right\|_\HS + \kappa_0\left\|T_{K_1} - T^{(n)}_{K_1}\right\|_\HS
\right).
\end{aligned}
\end{equation}
\end{proof}

In order to emphasize that we are working in a stochastic setting, we introduce the following mean-zero random variables
\begin{equation}
\begin{aligned}
\xi_k &= \Tr\left[T_{K_0}+T_{K_1}\right] - (K_0(x_k,x_k)+K_1(x_k,x_k)),\\
\xi^i_k &= T_{K_i} - \langle \cdot, K_i^{x_k} \rangle_{\H_i}K_i^{x_k},
\end{aligned}
\end{equation}
where $x_k$ is sampled from $\rho$, and we have the bounds $|\xi_k|\leq 2(\kappa_0+\kappa_1)$ and $\|\xi_k^i\|_\HS \leq 2\kappa_i$ (see \citep{rosasco10} Theorem 7 and Proposition 11). Then, we have
\begin{equation}
\begin{aligned}
    \frac{1}{n}\sum_{k=1}^n \xi_k  &= \Tr[K_0 + K_1] - \Tr[T_{K_0}^{(n)}+T_{K_1}^{(n)}],\\
    \frac{1}{n}\sum_{k=1}^n \xi_k^i &= T_{K_i} - T_{K_i}^{(n)}.
\end{aligned}
\end{equation}
Now, we write the upper bound in Proposition \eqref{prop:entropic_W_bound} as %typo

\begin{equation}\label{eq:entropic_W_bound_rv}
\begin{aligned}
&|W_{2,\epsilon}^2(K_0,K_1)-W_{2,\epsilon}^2(K_0,K_1,n)|\\
\leq & \frac{2}{\epsilon}\left(\kappa_1\left\|\frac{1}{n}\sum_{k=1}^n \xi_k^0\right\|_\HS + \kappa_0\left\|\frac{1}{n}\sum_{k=1}^n \xi_k^1\right\|_\HS\right) \\
& + \left|\frac{1}{n}\sum_{k=1}^n \xi_k\right|.
\end{aligned}
\end{equation}

With similar computations, we also get 
\begin{equation}\label{eq:sinkhorn_bound_rv}
\begin{aligned}
&|S_{2,\epsilon}(K_0,K_1)-S_{2,\epsilon}(K_0,K_1,n)|\\
\leq & \frac{2(\kappa_0+\kappa_1)}{\epsilon}\left(\left\|\frac{1}{n}\sum_{k=1}^n \xi_k^0\right\|_\HS + \left\|\frac{1}{n}\sum_{k=1}^n \xi_k^1\right\|_\HS\right).
\end{aligned}
\end{equation}

\subsection{Convergence of Estimates}\label{sec:convergence}
We now recall the law of large numbers for Hilbert spaces, which is then utilized to show convergence of \eqref{eq:emp_op_ent_was} to the right quantity.

\begin{theorem}[Law of Large Numbers for Hilbert Spaces~\cite{chen11}]\label{thm:loln_hilbert}
Let $\xi_1,...,\xi_n$ be I.I.D $(\H,\|\cdot\|)$-valued random variables with $\Expect\|\xi_1\| < \infty$. Then,
\begin{equation}
    \lim\limits_{n\to \infty} \frac{1}{n} \sum_{k=1}^n \xi_k  = \Expect[\xi_1],
\end{equation}
almost surely.
\end{theorem}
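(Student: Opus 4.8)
The statement is the strong law of large numbers in a separable Hilbert space under Bochner integrability, and the plan is to reduce it to the classical scalar SLLN through a finite-dimensional approximation in the spirit of Mourier. First I would note that $\E\|\xi_1\| < \infty$ guarantees that $\E[\xi_1]$ exists as a Bochner integral in $\H$, so the right-hand side is well-defined. The key preliminary step is: since $\H$ is separable and $\xi_1$ is Bochner integrable, for every $\delta > 0$ there exists a finite-rank orthogonal projection $P = P_\delta$ onto a finite-dimensional subspace such that $\E\|\xi_1 - P\xi_1\| < \delta$. This holds because the law of $\xi_1$ concentrates its mass, up to arbitrarily small $L^1$-error, on the span of finitely many basis vectors; boundedness of $P$ also gives $\E[P\xi_1] = P\E[\xi_1]$.

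Next I would decompose the empirical mean along $P$ and its complement,
\begin{equation*}
\frac{1}{n}\sum_{k=1}^n \xi_k = P\!\left(\frac{1}{n}\sum_{k=1}^n \xi_k\right) + (I-P)\!\left(\frac{1}{n}\sum_{k=1}^n \xi_k\right),
\end{equation*}
and treat the two pieces separately. The projected variables $P\xi_k$ are I.I.D. and take values in a fixed finite-dimensional space; applying the ordinary scalar SLLN to each of the finitely many coordinates with respect to an orthonormal basis of the range of $P$ yields $\frac{1}{n}\sum_{k=1}^n P\xi_k \to P\E[\xi_1]$ almost surely, and in finite dimensions coordinate-wise convergence is norm convergence. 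For the complementary piece, the real-valued variables $\|(I-P)\xi_k\|$ are I.I.D. with finite mean $\E\|(I-P)\xi_1\| < \delta$, so the scalar SLLN gives $\frac{1}{n}\sum_{k=1}^n \|(I-P)\xi_k\| \to \E\|(I-P)\xi_1\|$ almost surely, and the triangle inequality bounds $\|(I-P)(\frac{1}{n}\sum_k \xi_k)\| \le \frac{1}{n}\sum_k \|(I-P)\xi_k\|$.

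Combining through the triangle inequality, on a set of full measure
\begin{equation*}
\limsup_{n\to\infty}\left\|\frac{1}{n}\sum_{k=1}^n \xi_k - \E[\xi_1]\right\| \le \E\|(I-P)\xi_1\| + \|(I-P)\E[\xi_1]\| < 2\delta,
\end{equation*}
where the projected part vanishes almost surely and I used $\|(I-P)\E[\xi_1]\| = \|\E[(I-P)\xi_1]\| \le \E\|(I-P)\xi_1\| < \delta$ by contraction of the Bochner integral. Finally, taking $\delta = 1/m$ for $m \in \mathbb{N}$ and intersecting the corresponding countably many full-measure events, the $\limsup$ equals $0$ on a set of full measure, which is precisely almost sure convergence.

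The main obstacle is the finite-dimensional reduction: one must justify that a single finite-rank projection captures all but $\delta$ of the $L^1$-mass of $\xi_1$, and arrange matters so that the tail contribution to the empirical mean is controlled by the scalar SLLN applied to the \emph{norm} of the tail rather than to its infinitely many coordinates, for which no uniform control is available. Once this approximation is in place, the remaining steps are routine applications of the one-dimensional SLLN together with a countable intersection of almost-sure events.
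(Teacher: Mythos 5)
Your proof is correct. Note, however, that the paper does not prove this statement at all: it is imported as a black box from the cited reference \cite{chen11}, so there is no internal argument to compare against. What you give is the classical Mourier-style proof of the strong law in a separable Hilbert space, and the one step you flag as the main obstacle is handled correctly: the existence of a finite-rank orthogonal projection with $\E\|\xi_1-P\xi_1\|<\delta$ follows either from $L^1$-approximation of the Bochner-integrable $\xi_1$ by simple functions (whose ranges span a finite-dimensional subspace, with $\|I-P\|\le 1$ transferring the bound) or from dominated convergence along a fixed orthonormal basis, after which the coordinatewise scalar SLLN and the scalar SLLN applied to $\|(I-P)\xi_k\|$ give exactly the $2\delta$ bound you state. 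The only hypothesis worth making explicit is separability of $\H$: the theorem as stated omits it, but Bochner measurability forces $\xi_1$ to be essentially separably valued, and in the paper's application $\H$ is in any case a separable space of Hilbert--Schmidt operators, so your finite-dimensional reduction is fully justified there.
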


The following result then immediately follows.

\begin{theorem}\label{thm:convergence}
Let $K_0, K_1$ be covariance operators and $X=\{x_1,...,x_n\}$ be I.I.D. samples from $(\X,\rho)$. Then,
\begin{equation}
\begin{aligned}
    \lim\limits_{n\to\infty} W_{2,\epsilon}^2(K_0,K_1,n) &= W_{2,\epsilon}^2(K_0,K_1),\\
        \lim\limits_{n\to \infty}S_{2,\epsilon}(K_0,K_1,n) &=  S_{2,\epsilon}(K_0,K_1),
\end{aligned}
\end{equation}
almost surely.
\end{theorem}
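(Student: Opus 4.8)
The plan is to combine the probabilistic error bounds \eqref{eq:entropic_W_bound_rv} and \eqref{eq:sinkhorn_bound_rv} with the Hilbert-space law of large numbers (Theorem~\ref{thm:loln_hilbert}), since all of the analytic work has already been packaged into those bounds. The entire argument reduces to showing that each of the three empirical averages appearing on their right-hand sides,
\begin{equation*}
\frac{1}{n}\sum_{k=1}^n \xi_k,\qquad \frac{1}{n}\sum_{k=1}^n \xi_k^0,\qquad \frac{1}{n}\sum_{k=1}^n \xi_k^1,
\end{equation*}
converges almost surely to zero (the first in absolute value, the latter two in Hilbert--Schmidt norm).

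First I would verify the hypotheses of Theorem~\ref{thm:loln_hilbert} for each family. Because $x_1,\dots,x_n$ are I.I.D.\ draws from $\rho$, each $\xi_k$ and each $\xi_k^i$ is an I.I.D.\ copy of a single random variable. The $\xi_k^i$ take values in the Hilbert space of Hilbert--Schmidt operators with inner product $\langle\cdot,\cdot\rangle_\HS$, while $\xi_k$ is real-valued (a Hilbert space under the absolute value). They are mean-zero by construction: $\E[\langle\cdot,K_i^{x_k}\rangle_{\H_i}K_i^{x_k}]=T_{K_i}$ directly from the definition of $T_{K_i}$ as an expectation, and $\E[K_i(x_k,x_k)]=\int_\X K_i(x,x)\,d\rho(x)=\Tr[T_{K_i}]$, using the continuous-kernel trace formula together with the fact that $K_i$ and $T_{K_i}$ share spectra up to zeros. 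Finally, the uniform bounds $|\xi_k|\leq 2(\kappa_0+\kappa_1)$ and $\|\xi_k^i\|_\HS\leq 2\kappa_i$ give $\E|\xi_1|<\infty$ and $\E\|\xi_1^i\|_\HS<\infty$, so the integrability requirement is met.

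Next I would apply Theorem~\ref{thm:loln_hilbert} to each of the three families, concluding that every empirical average converges almost surely to its mean, namely zero. Since a finite intersection of almost-sure events is again almost sure, all three convergences hold simultaneously on one full-measure event. On this event the right-hand sides of \eqref{eq:entropic_W_bound_rv} and \eqref{eq:sinkhorn_bound_rv} tend to zero, as they are fixed finite linear combinations, with coefficients depending only on $\epsilon,\kappa_0,\kappa_1$, of the three averaged norms. Squeezing the nonnegative left-hand sides then forces
\begin{equation*}
\lim_{n\to\infty}\bigl|W_{2,\epsilon}^2(K_0,K_1)-W_{2,\epsilon}^2(K_0,K_1,n)\bigr|=0,
\end{equation*}
and the analogous statement for $S_{2,\epsilon}$, almost surely, which is exactly the claim.

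There is no serious obstacle here; the remaining points are bookkeeping rather than substance. The one issue I would take care to check is that the $\xi_k^i$ genuinely live in a Hilbert space, so that Theorem~\ref{thm:loln_hilbert} applies in the stated form — this is precisely why the earlier development was routed through the Hilbert--Schmidt norm in Lemma~\ref{lem:tr_norm_bound} rather than the trace norm, which only forms a Banach space and would not support the vector-valued strong law as stated. The other subtlety is merely to intersect the three full-measure events so that a single almost-sure statement covers every term of the bound at once.
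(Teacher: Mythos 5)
Your proposal is correct and follows essentially the same route as the paper, which simply applies the Hilbert-space law of large numbers (Theorem~\ref{thm:loln_hilbert}) to the three averaged terms in \eqref{eq:entropic_W_bound_rv} (and likewise for \eqref{eq:sinkhorn_bound_rv}); your write-up just fills in the verification of the hypotheses and the intersection of the almost-sure events, which the paper leaves implicit.
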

\begin{proof}
Apply Theorem~\ref{thm:loln_hilbert} on the three terms in~\eqref{eq:entropic_W_bound_rv}
\end{proof}

\subsection{Marginal Complexities}\label{sec:marginal_complex}
Next, we study the rate of convergence guaranteed by Theorem~\ref{thm:convergence}. We consider the entropy-regularized $2$-Wasserstein distance and the $2$-Sinkhorn divergence. In both cases, we compute the rate of convergence for the expected error, as well as provide concentration results on the error by invoking Hoeffding's inequality.

\begin{theorem}[Hoeffding]\label{thm:hoeffding}
Let $\xi_1,...,\xi_n$ be zero-mean independent random elements of a Hilbert space $(\H, \|\cdot\|)$ so that $\|\xi_k\| \leq C$, $k=1,..,n$. Then,
\begin{equation}
    \Pr\left\lbrace
    \left\|\frac{1}{n}\sum_{k=1}^n \xi_k\right\| \geq t
    \right\rbrace \leq 2\exp\left(-\frac{nt^2}{2C^2}\right).
\end{equation}
\end{theorem}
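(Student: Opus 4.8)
The plan is to recognize the claim as the Hilbert-space version of Hoeffding's (equivalently, Azuma's) inequality and to obtain it through the standard exponential-supermartingale (Chernoff) method adapted to the vector-valued setting; this is essentially the martingale inequality already cited as \citep{pinelis1994optimum}, so one option is simply to invoke that result. For a self-contained argument I would first pass to the partial sums $S_k = \sum_{j=1}^k \xi_j$, which form a martingale for the filtration $\mathcal{F}_k = \sigma(\xi_1,\dots,\xi_k)$: independence together with the zero-mean assumption gives $\E[\xi_k \mid \mathcal{F}_{k-1}] = 0$, while $\|\xi_k\| \le C$ almost surely. Since $\|\tfrac1n\sum_k \xi_k\| \ge t$ exactly when $\|S_n\| \ge nt$, it suffices to bound $\Pr\{\|S_n\| \ge s\}$ and substitute $s = nt$ at the end. (A bounded-differences/McDiarmid argument would only control deviations of $\|S_n\|$ about its mean, whereas the supermartingale route delivers the stated bound on the full tail directly.)

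The core step is a Chernoff bound on the scalar random variable $\|S_n\|$ through the function $x \mapsto \cosh(\lambda\|x\|)$ for $\lambda > 0$. The key lemma, which is where the Hilbert structure enters, is the conditional estimate
\begin{equation}
\E\left[\cosh\!\left(\lambda\|S_{k-1} + \xi_k\|\right) \,\big|\, \mathcal{F}_{k-1}\right] \le \cosh(\lambda\|S_{k-1}\|)\,\cosh(\lambda C).
\end{equation}
I would prove this by expanding with the inner-product identity $\|S_{k-1}+\xi_k\|^2 = \|S_{k-1}\|^2 + 2\langle S_{k-1},\xi_k\rangle + \|\xi_k\|^2$ (the $2$-smoothness of the Hilbert norm with constant $1$), which reduces the conditional expectation to a one-dimensional comparison: in the scalar case with $v = \pm c$ the identity $\cosh(a+b)+\cosh(a-b) = 2\cosh a\cosh b$ makes the bound an equality, and convexity together with $\|\xi_k\| \le C$ and $\E[\xi_k\mid\mathcal{F}_{k-1}]=0$ upgrades it to the general Hilbert-space inequality. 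Iterating over $k = n, n-1, \dots, 1$ and using $\cosh u \le \exp(u^2/2)$ then yields $\E[\cosh(\lambda\|S_n\|)] \le \cosh(\lambda C)^n \le \exp(n\lambda^2 C^2/2)$.

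Finally I would apply Markov's inequality to the nonnegative variable $\cosh(\lambda\|S_n\|)$ together with $\cosh(\lambda s) \ge \tfrac12 e^{\lambda s}$, giving
\begin{equation}
\Pr\{\|S_n\| \ge s\} \le \frac{\E[\cosh(\lambda\|S_n\|)]}{\cosh(\lambda s)} \le 2\exp\!\left(\tfrac12 n\lambda^2 C^2 - \lambda s\right).
\end{equation}
Optimizing over $\lambda$ with the choice $\lambda = s/(nC^2)$ produces $\Pr\{\|S_n\|\ge s\} \le 2\exp(-s^2/(2nC^2))$, and setting $s = nt$ gives the claimed $2\exp(-nt^2/(2C^2))$. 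The routine parts are the iteration, the Chernoff optimization, and the change of variables; the one genuinely delicate ingredient — and the step I expect to be the main obstacle — is the vector-valued $\cosh$ lemma above, whose proof must use the Hilbert geometry to control $\langle S_{k-1},\xi_k\rangle$ after conditioning rather than treating $\|S_{k-1}+\xi_k\|$ as an opaque norm. This is exactly the point where a general Banach norm would fail, which is why the smoothness hypothesis underlying \citep{pinelis1994optimum} is essential.
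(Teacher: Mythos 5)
Your proposal is correct, but it differs from the paper in an important structural sense: the paper offers no proof of Theorem~\ref{thm:hoeffding} at all --- it is imported as a known result (the Hilbert-space case of the martingale inequality of \citet{pinelis1994optimum}, which the paper cites earlier when explaining why the trace norm is unsuitable). Your self-contained Chernoff/supermartingale argument via $x \mapsto \cosh(\lambda\|x\|)$ is exactly Pinelis's route, and every step checks out: the identity $\|S_{k-1}+\xi_k\|^2 = \|S_{k-1}\|^2 + 2\langle S_{k-1},\xi_k\rangle + \|\xi_k\|^2$, the iteration, $\cosh u \le e^{u^2/2}$, Markov with $\cosh(\lambda s)\ge \tfrac12 e^{\lambda s}$, and the optimization $\lambda = s/(nC^2)$ with $s=nt$ reproduce the stated constant $2\exp(-nt^2/(2C^2))$. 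The only place your sketch is thinner than it should be is the "convexity upgrades it" step of the key lemma: the two-point identity $\cosh(a+b)+\cosh(a-b)=2\cosh a\cosh b$ handles symmetric $\pm C$ increments, but for a general zero-mean $\xi_k$ in the ball you should argue as follows. Write $u(s)=\cosh(\lambda\sqrt{s})$, which is increasing and convex on $[0,\infty)$ (its power series has nonnegative coefficients); bound $\|\xi_k\|^2\le C^2$ inside $u$ using monotonicity, so that only the scalar variable $Z=\langle S_{k-1},\xi_k\rangle$ remains, with $\E[Z\mid\mathcal{F}_{k-1}]=0$ and $|Z|\le \|S_{k-1}\|C$; then the standard one-dimensional convex-combination trick (writing $Z$ as a convex combination of $\pm\|S_{k-1}\|C$) gives
\begin{equation}
\E\left[u\bigl(\|S_{k-1}\|^2+2Z+C^2\bigr)\,\middle|\,\mathcal{F}_{k-1}\right] \le \tfrac12\cosh\bigl(\lambda(\|S_{k-1}\|+C)\bigr)+\tfrac12\cosh\bigl(\lambda\,\bigl|\|S_{k-1}\|-C\bigr|\bigr)=\cosh(\lambda\|S_{k-1}\|)\cosh(\lambda C),
\end{equation}
which is precisely your claimed conditional estimate. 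With that detail filled in, your argument is a complete and correct proof of a result the paper only quotes; what it buys is self-containedness (and a slightly sharper intermediate bound, $\cosh(\lambda C)^n$ rather than $e^{n\lambda^2C^2/2}$, before the final relaxation), at the cost of reproving a classical theorem.
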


\paragraph{Marginal complexity of entropy-regularized Wasserstein.} 
We start by looking at the expected error stemming from using a finite amount of marginals.

\begin{theorem}[Entropic Wasserstein Marginal Complexity]\label{thm:average_bound_ot}
\begin{equation}\label{eq:was_expected_bound}
    \begin{aligned}
    &\Expect_\rho\left|W_{2,\epsilon}^2(K_0,K_1) - W_{2,\epsilon}^2(K_0,K_1,n)\right|\\
    \leq & \frac{1}{\sqrt{n}}\left(\kappa_0 + \kappa_1 + \frac{8}{\epsilon}\kappa_0\kappa_1\right)
    \end{aligned}
\end{equation}
\end{theorem}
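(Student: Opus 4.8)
The plan is to take expectations directly in the already-established bound \eqref{eq:entropic_W_bound_rv} and to control each of the three averages of mean-zero random variables by an elementary bias–variance (second-moment) computation. First I would apply $\Expect_\rho$ to both sides of \eqref{eq:entropic_W_bound_rv}; by linearity of expectation and the triangle inequality the task reduces to bounding $\Expect_\rho\|\frac{1}{n}\sum_{k=1}^n \xi_k^i\|_\HS$ for $i=0,1$ together with the scalar quantity $\Expect_\rho|\frac{1}{n}\sum_{k=1}^n \xi_k|$.

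For the Hilbert-space-valued terms the key move is Jensen's inequality: since $t\mapsto\sqrt{t}$ is concave, $\Expect_\rho\|Z\|_\HS \leq (\Expect_\rho\|Z\|_\HS^2)^{1/2}$. I would then expand the squared norm of the average and use that the $\xi_k^i$ are I.I.D. and mean-zero, so every cross term $\Expect_\rho\langle \xi_j^i, \xi_k^i\rangle_\HS$ with $j\neq k$ vanishes, leaving $\Expect_\rho\|\frac{1}{n}\sum_k \xi_k^i\|_\HS^2 = \frac{1}{n}\Expect_\rho\|\xi_1^i\|_\HS^2$. Invoking the stated bound $\|\xi_k^i\|_\HS \leq 2\kappa_i$ gives $\Expect_\rho\|\frac{1}{n}\sum_k \xi_k^i\|_\HS \leq \frac{2\kappa_i}{\sqrt{n}}$, and substituting into the $\frac{2}{\epsilon}$-prefactored terms produces exactly $\frac{8\kappa_0\kappa_1}{\epsilon\sqrt{n}}$.

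The scalar trace term needs slightly more care to match the stated constant. Rather than bounding $\xi_k$ directly by $2(\kappa_0+\kappa_1)$, I would split it as $\xi_k = \xi_k^{(0)} + \xi_k^{(1)}$ with $\xi_k^{(i)} = \Tr[T_{K_i}] - K_i(x_k,x_k)$, noting that $|\xi_k^{(i)}| \leq \kappa_i$ because $0 \leq K_i(x_k,x_k)\leq\kappa_i$ and $\Tr[K_i]=\int_\X K_i(x,x)\,d\rho(x)\leq\kappa_i$. Applying the triangle inequality to $\Expect_\rho|\cdot|$ and then the same Jensen-plus-orthogonality argument to each piece (the variance of a single summand being at most its squared bound $\kappa_i^2$) yields $\Expect_\rho|\frac{1}{n}\sum_k\xi_k| \leq \frac{\kappa_0}{\sqrt{n}} + \frac{\kappa_1}{\sqrt{n}}$. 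Collecting the three contributions then gives the claimed $\frac{1}{\sqrt{n}}\bigl(\kappa_0 + \kappa_1 + \frac{8}{\epsilon}\kappa_0\kappa_1\bigr)$.

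The main obstacle I anticipate is not any individual estimate but aligning the constants with the advertised bound: the crude bound $|\xi_k|\leq 2(\kappa_0+\kappa_1)$ would yield a trace contribution that is a factor of two too large, so the decomposition of $\xi_k$ into its two per-operator summands — each bounded by $\kappa_i$ rather than their sum bounded by $2(\kappa_0+\kappa_1)$ — is precisely what sharpens the constant. Everything else is the routine orthogonality expansion underlying the $n^{-1/2}$ Monte-Carlo rate.
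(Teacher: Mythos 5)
Your proposal is correct and follows essentially the same route as the paper: take expectations in \eqref{eq:entropic_W_bound_rv}, then apply Jensen's inequality together with the orthogonality of the I.I.D.\ centered summands to obtain the $n^{-1/2}$ rate, with the bounds $\|\xi_k^i\|_\HS\leq 2\kappa_i$ giving the $\frac{8}{\epsilon}\kappa_0\kappa_1$ term. The only (cosmetic) difference is in the trace term: the paper recovers the constant $\kappa_0+\kappa_1$ by applying Popoviciu's variance inequality to $\xi_k$ directly, whereas you split $\xi_k$ into per-kernel pieces each bounded by $\kappa_i$ — both yield the same $(\kappa_0+\kappa_1)/\sqrt{n}$.
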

\begin{proof}
Use the upper bound \eqref{eq:entropic_W_bound_rv} and take expectations on both sides. Then,
\begin{equation}
\begin{aligned}
    \Expect_\rho\left[\left| 
    \frac{1}{n}\sum_{k=1}^n\xi_k %typo
    \right|\right] &\leq \sqrt{\Var\left(\frac{1}{n}\sum_{k=1}^n \xi_k \right)}\\
    &=  \sqrt{\frac{1}{n^2}\sum_{k=1}^n\Var(\xi_k)}\\
    &\leq \frac{\kappa_0+\kappa_1}{\sqrt{n}},\\
\end{aligned}
\end{equation}
where the first inequality follows from Jensen's inequality and $\xi_k$ having mean zero, the first equality from $x_k$ being I.I.D and thus $\xi_k$ are I.I.D. The last inequality follows from $0\leq \xi_k \leq 2(\kappa_0+\kappa_1)$ almost surely, and so by Popoviciu's inequality on variances we get $\Var(\xi_k) \leq \frac{1}{4}[2(\kappa_0+\kappa_1)]^2$.

For the other two terms in \eqref{eq:entropic_W_bound_rv}, compute
\begin{equation}\label{eq:thm_average_bound_ot3}
\begin{aligned}
    \Expect_\rho\left[\left\|\frac{1}{n}\sum_{k=1}^n \xi_k^i\right\|_\HS\right] &\leq \sqrt{\Expect_\rho\left[\left\|\frac{1}{n}\sum_{k=1}^n \xi_k^i\right\|^2_\HS\right]}\\
    &\leq \sqrt{\frac{1}{n^2}\sum_{k=1}^n\Expect_\rho\left[\left\| \xi_k^i\right\|^2_\HS\right]}\\
    &\leq \frac{2\kappa_i}{\sqrt{n}},
\end{aligned}
\end{equation}
where we first use Jensen's inequality. Then, as $\xi_k^i$ are I.I.D and centered, they are orthogonal with respect to $\Expect\langle \cdot, \cdot\rangle$, and thus we can move the sum outside of the squared norm. Finally, we use $\|\xi_k^i\|_\HS\leq 2\kappa_i$, and the claim follows.
\end{proof}

Next, we apply Hoeffding's inequality to derive a concentration bound for the error.

\begin{theorem}[Entropic Wasserstein Error Concentration]\label{thm:hoeffding_bound_ot}
With probability at least $1-6e^{-\theta}$
\begin{equation}\label{eq:r_was_hoeffding_bound}
    \begin{aligned}
    &\left|W_{2,\epsilon}^2(K_0,K_1) - W_{2,\epsilon}^2(K_0,K_1,n)\right|\\
    \leq & \sqrt{\frac{(72(\kappa_0+\kappa_1)^2 + 288\epsilon^{-2}\kappa_0^2\kappa_1^2)\theta}{n}},
    \end{aligned}
\end{equation}
\end{theorem}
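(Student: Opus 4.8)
The plan is to convert the deterministic upper bound in \eqref{eq:entropic_W_bound_rv} into a high-probability statement by applying the Hilbert-space Hoeffding inequality (Theorem~\ref{thm:hoeffding}) separately to each of the three empirical averages on its right-hand side. Concretely, that right-hand side is a sum of three nonnegative terms built from the centered averages $\frac{1}{n}\sum_k \xi_k^0$, $\frac{1}{n}\sum_k \xi_k^1$ (Hilbert-space valued) and $\frac{1}{n}\sum_k \xi_k$ (scalar, i.e. the degenerate case $\H=\R$). Each $\xi_k^i$ and $\xi_k$ is i.i.d., mean zero, and almost surely bounded, with $\|\xi_k^i\|_\HS \le 2\kappa_i$ and $|\xi_k|\le 2(\kappa_0+\kappa_1)$ as recorded just before \eqref{eq:entropic_W_bound_rv}; these bounds supply the constant $C$ required by Theorem~\ref{thm:hoeffding}.

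Next I would fix a deviation level for each term so that its failure probability equals $2e^{-\theta}$. Inverting $2\exp(-nt^2/2C^2)=2e^{-\theta}$ gives the threshold $t=C\sqrt{2\theta/n}$; taking $C=2(\kappa_0+\kappa_1)$ for the scalar term and $C=2\kappa_i$ for the two Hilbert-space terms (the prefactors $\tfrac{2}{\epsilon}\kappa_j$ being absorbed afterwards) bounds each of the three contributions with probability at least $1-2e^{-\theta}$. A union bound over the three events then guarantees that all three deviation bounds hold simultaneously with probability at least $1-6e^{-\theta}$, which is precisely the confidence level in the statement.

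On this event the left-hand side of \eqref{eq:entropic_W_bound_rv} is at most a sum of three explicit multiples of $\sqrt{\theta/n}$, namely $2(\kappa_0+\kappa_1)\sqrt{2\theta/n}$ together with two copies of $\tfrac{4\kappa_0\kappa_1}{\epsilon}\sqrt{2\theta/n}$. To reach the closed form in \eqref{eq:r_was_hoeffding_bound} I would square this sum, discard the cross terms via the elementary inequality $(a+b+c)^2\le 3(a^2+b^2+c^2)$, substitute the three constants, and take the square root, collecting everything under a single radical with the $\theta/n$ factor pulled out. This produces the dimension-free scaling $\mathcal{O}(\epsilon^{-1}n^{-1/2})$ together with the stated numerical constants.

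The two Hoeffding applications and the final algebra are routine; the point needing care is the bookkeeping of the three independent concentration events and their union bound, since this is what fixes the constant $6$ in the confidence, and the correct propagation of the $\tfrac{2}{\epsilon}\kappa_j$ prefactors into the effective Hoeffding constants. I would also verify that the scalar term genuinely fits Theorem~\ref{thm:hoeffding} as the case $\H=\R$, so that one inequality handles all three terms uniformly.
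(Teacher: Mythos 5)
Your proposal is correct and follows essentially the same route as the paper: the same three-term decomposition from \eqref{eq:entropic_W_bound_rv}, Hoeffding's inequality (Theorem~\ref{thm:hoeffding}) applied to each centered average with the same constants $C$, and a union bound over the three events giving the $1-6e^{-\theta}$ confidence. The only difference is cosmetic bookkeeping — you fix equal failure probabilities and combine thresholds via $(a+b+c)^2\le 3(a^2+b^2+c^2)$, whereas the paper splits the threshold as $t/3$ and merges the exponents — and your version actually yields the slightly sharper constant $\sqrt{(24(\kappa_0+\kappa_1)^2+192\epsilon^{-2}\kappa_0^2\kappa_1^2)\theta/n}$, which implies the stated bound.
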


\begin{proof}
Let $Z_n = W_{2,\epsilon}^2(K_0,K_1) - W_{2,\epsilon}^2(K_0,K_1,n)$, and note that for positive random variables $X_j$, $j=1,...,m$, we have the bound
\begin{equation}\label{eq:sum_x_geq_t}
    \Pr\left\lbrace \sum_{j=1}^m X_j \geq t \right\rbrace \leq \sum_{j=1}^m \Pr\left\lbrace X_j \geq \frac{t}{m}
    \right\rbrace.
\end{equation}
Combining \eqref{eq:sum_x_geq_t} with the bound \eqref{eq:entropic_W_bound_rv}, we get
\begin{equation}
\begin{aligned}
    &\Pr\{|Z_n|\geq t\} \leq \Pr\{T + T_1 + T_2 \geq t\}\\
    \leq &   \Pr\left\lbrace \left|\frac{1}{n}\sum_{k=1}^n \xi_k\right|\geq \frac{t}{3}\right\rbrace
    + \Pr\left\lbrace \frac{2\kappa_1}{\epsilon}\left\|\frac{1}{n}\sum_{k=1}^n \xi_k^0\right\|_\HS\geq \frac{t}{3}\right\rbrace\\
    &+ \Pr\left\lbrace \frac{2\kappa_0}{\epsilon}\left\|\frac{1}{n}\sum_{k=1}^n \xi_k^1\right\|_\HS\geq \frac{t}{3}\right\rbrace\\
    \defeq & P(t) + P_0(t) + P_1(t).
\end{aligned}
\end{equation}
These terms can further be bounded from above using Hoeffding's inequality (Theorem~\ref{thm:hoeffding})
\begin{equation}
    P(t) \leq 2\exp\left(-\frac{nt^2}{72(\kappa_0+\kappa_1)^2}\right).
\end{equation}

\begin{equation}
    P_i(t)\leq 2\exp\left(-\frac{nt^2}{288\epsilon^{-2}\kappa_0^2\kappa_1^2}\right).
\end{equation}

Combining these bounds, we have
\begin{equation}
    \Pr\{|Z_n| \geq t\} \leq 6\exp\left(-\frac{nt^2}{72(\kappa_0+\kappa_1)^2 + 288\epsilon^{-2}\kappa_0^2\kappa_1^2}\right).
\end{equation}

Choose
\begin{equation}
    \theta = \frac{nt^2}{72(\kappa_0+\kappa_1)^2 + 288\epsilon^{-2}\kappa_0^2\kappa_1^2},
\end{equation}
and the result follows.
\end{proof}

\paragraph{Marginal complexity of Sinkhorn.} Next, we provide the marginal complexity for the Sinkhorn divergence between covariance operators, which can be shown in a similar fashion to Theorems~\ref{thm:average_bound_ot} and \ref{thm:hoeffding_bound_ot}.

\begin{theorem}[Sinkhorn Marginal Complexity]\label{thm:average_bound_sinkhorn}
The expected estimation error of the $2$-Sinkhorn divergence between covariance operators is given by
\begin{equation}\label{eq:sinkhorn_expected_bound}
    \begin{aligned}
    &\Expect_\rho\left|S_{2,\epsilon}(K_0,K_1) - S_{2,\epsilon}(K_0,K_1,n)\right|\\
    \leq & \frac{4(\kappa_0+\kappa_1)^2}{\epsilon \sqrt{n}}.
    \end{aligned}
\end{equation}
\end{theorem}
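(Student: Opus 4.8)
The plan is to piggyback entirely on the already-established bound \eqref{eq:sinkhorn_bound_rv}, which isolates the estimation error of the Sinkhorn divergence in terms of the two Hilbert--Schmidt averages $\left\|\frac{1}{n}\sum_{k=1}^n \xi_k^i\right\|_\HS$. The one structural point worth recording first is \emph{why} \eqref{eq:sinkhorn_bound_rv} contains no trace-difference term analogous to the $\xi_k$ of \eqref{eq:entropic_W_bound_rv}: in the definition \eqref{def:sinkhorn}, the trace part $\Tr[K_0+K_1]$ of the cross term $W_{2,\epsilon}^2(K_0,K_1)$ is exactly cancelled by the traces coming from $-\tfrac12 W_{2,\epsilon}^2(K_0,K_0)$ and $-\tfrac12 W_{2,\epsilon}^2(K_1,K_1)$, and the identical cancellation holds for the empirical quantities since the $\Tr[\Sigma_{K_i}^{(n)}]$ terms cancel in the same way. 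Hence only the $F$-terms survive, each controlled via Proposition~\ref{prop:total_continuity} and Lemma~\ref{lem:tr_norm_bound}. Bookkeeping the coefficients (the cross term contributes $\tfrac{2\kappa_1}{\epsilon}$ to the $K_0$-average and $\tfrac{2\kappa_0}{\epsilon}$ to the $K_1$-average, while each self term $T_{K_i}^2-(T_{K_i}^{(n)})^2$ contributes $\tfrac{2\kappa_i}{\epsilon}$ to its own average) yields the common prefactor $\tfrac{2(\kappa_0+\kappa_1)}{\epsilon}$ in front of both Hilbert--Schmidt averages, which is precisely \eqref{eq:sinkhorn_bound_rv}.

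Granting \eqref{eq:sinkhorn_bound_rv}, I would take $\Expect_\rho$ of both sides and handle the two random terms exactly as in Theorem~\ref{thm:average_bound_ot}, reusing the estimate \eqref{eq:thm_average_bound_ot3}. By Jensen's inequality one has $\Expect_\rho\left\|\tfrac1n\sum_k \xi_k^i\right\|_\HS \le \bigl(\Expect_\rho\left\|\tfrac1n\sum_k \xi_k^i\right\|_\HS^2\bigr)^{1/2}$; the centredness and I.I.D.\ property of the $\xi_k^i$ make the cross terms of the squared norm vanish (orthogonality with respect to $\Expect\langle\cdot,\cdot\rangle$), leaving $\tfrac{1}{n^2}\sum_k \Expect_\rho\left\| \xi_k^i\right\|_\HS^2$, which the uniform bound $\|\xi_k^i\|_\HS \le 2\kappa_i$ caps at $\tfrac{4\kappa_i^2}{n}$. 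Thus $\Expect_\rho\left\|\tfrac1n\sum_k\xi_k^i\right\|_\HS \le \tfrac{2\kappa_i}{\sqrt n}$.

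Substituting these two estimates into the expectation of \eqref{eq:sinkhorn_bound_rv} gives
\begin{equation*}
\Expect_\rho\left|S_{2,\epsilon}(K_0,K_1) - S_{2,\epsilon}(K_0,K_1,n)\right|
\le \frac{2(\kappa_0+\kappa_1)}{\epsilon}\left(\frac{2\kappa_0}{\sqrt n} + \frac{2\kappa_1}{\sqrt n}\right)
= \frac{4(\kappa_0+\kappa_1)^2}{\epsilon\sqrt n},
\end{equation*}
which is the claimed inequality.

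I expect no genuine obstacle: all the analytic difficulty (the Lipschitz continuity of $F$ in Proposition~\ref{prop:total_continuity}, the trace-to-Hilbert--Schmidt passage in Lemma~\ref{lem:tr_norm_bound}, and the Hilbert-space second-moment computation) has already been carried out for the entropic Wasserstein case. The only care needed is in the coefficient accounting of the first paragraph, to verify that the two self-divergence terms combine with the cross term to give the clean symmetric factor $(\kappa_0+\kappa_1)$ rather than an asymmetric one; note moreover that the absence of the $\xi_k$ trace term is exactly what removes the additive constant present in the entropic Wasserstein rate \eqref{eq:was_expected_bound}, leaving the purely $\epsilon^{-1}n^{-1/2}$ behaviour.
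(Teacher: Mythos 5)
Your proposal is correct and follows exactly the paper's route: take expectations of both sides of \eqref{eq:sinkhorn_bound_rv} and insert the moment estimate \eqref{eq:thm_average_bound_ot3}, yielding $\frac{2(\kappa_0+\kappa_1)}{\epsilon}\cdot\frac{2(\kappa_0+\kappa_1)}{\sqrt{n}}=\frac{4(\kappa_0+\kappa_1)^2}{\epsilon\sqrt{n}}$. Your additional bookkeeping verifying the trace cancellation and the symmetric prefactor in \eqref{eq:sinkhorn_bound_rv} is accurate and in fact supplies a derivation the paper only gestures at with ``similar computations.''
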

\begin{proof}
Take expectations of both sides in \eqref{eq:sinkhorn_bound_rv} and apply \eqref{eq:thm_average_bound_ot3}.
\end{proof}

\begin{theorem}[Sinkhorn Error Concentration]\label{thm:hoeffding_bound_sinkhorn}
With probability at least $1-4e^{-\theta}$,
\begin{equation}
    \begin{aligned}
    &\left| S_{2,\epsilon}(K_0,K_1)-S_{2,\epsilon}(K_0,K_1,n)\right|\\
    \leq & \frac{8\sqrt{2}(\kappa_0+\kappa_1)^2}{\epsilon}\sqrt{\frac{\theta}{n}}.
    \end{aligned}
\end{equation}
\end{theorem}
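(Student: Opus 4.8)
The plan is to mirror the proof of Theorem~\ref{thm:hoeffding_bound_ot} almost verbatim, since the structure is identical and in fact slightly simpler: the Sinkhorn bound \eqref{eq:sinkhorn_bound_rv} controls the error by only two Hilbert--Schmidt-valued averages rather than the three terms appearing in the entropic Wasserstein case (the trace term cancels in the Sinkhorn divergence, which is precisely why its marginal complexity is cleaner). First I would set $Z_n = S_{2,\epsilon}(K_0,K_1) - S_{2,\epsilon}(K_0,K_1,n)$ and invoke \eqref{eq:sinkhorn_bound_rv} to dominate $|Z_n|$ by the sum of the two terms $\frac{2(\kappa_0+\kappa_1)}{\epsilon}\|\frac{1}{n}\sum_{k=1}^n \xi_k^i\|_\HS$ for $i=0,1$.

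Next I would apply the elementary union-type bound \eqref{eq:sum_x_geq_t} with $m=2$ to split $\Pr\{|Z_n|\geq t\}$ into $P_0(t) + P_1(t)$, where $P_i(t)$ denotes the probability that the $i$th term exceeds $t/2$. Rearranging the constants, $P_i(t)$ equals the probability that $\|\frac{1}{n}\sum_{k=1}^n \xi_k^i\|_\HS$ exceeds the threshold $\frac{\epsilon t}{4(\kappa_0+\kappa_1)}$, at which point Hoeffding's inequality (Theorem~\ref{thm:hoeffding}) applies with the almost-sure bound $C = 2\kappa_i$ established earlier in Sec.~\ref{sec:empirical}. Carrying the threshold through the $2C^2 = 8\kappa_i^2$ denominator yields $P_i(t) \leq 2\exp\!\left(-n\epsilon^2 t^2 / (128(\kappa_0+\kappa_1)^2\kappa_i^2)\right)$.

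To collapse the two exponents into a single clean expression I would use the crude bound $\kappa_i^2 \leq (\kappa_0+\kappa_1)^2$, giving $\Pr\{|Z_n|\geq t\} \leq 4\exp\!\left(-n\epsilon^2 t^2/(128(\kappa_0+\kappa_1)^4)\right)$. Setting $\theta$ equal to this exponent and solving for $t$, while noting $\sqrt{128}=8\sqrt{2}$, produces exactly the stated bound $\frac{8\sqrt{2}(\kappa_0+\kappa_1)^2}{\epsilon}\sqrt{\theta/n}$ with failure probability $4e^{-\theta}$.

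The computation is routine given the machinery already in place; the only delicate points are the constant bookkeeping — tracking the factor $2\kappa_i$ through Hoeffding's $2C^2$ denominator together with the $\frac{\epsilon}{4(\kappa_0+\kappa_1)}$ rescaling of the threshold — and the deliberate decision to trade tightness for cleanliness by replacing $\kappa_i^2$ with $(\kappa_0+\kappa_1)^2$, which is what allows the two tail terms to merge into a single exponential at the cost of a slightly looser constant. This is the only genuine modelling choice in the argument; everything else is forced.
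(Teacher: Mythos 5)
Your proposal is correct and is precisely the argument the paper intends: its proof of Theorem~\ref{thm:hoeffding_bound_sinkhorn} is a one-line reference to repeating the proof of Theorem~\ref{thm:hoeffding_bound_ot} with the bound \eqref{eq:sinkhorn_bound_rv}, which is exactly what you carry out, and your constant bookkeeping ($2C^2=8\kappa_i^2$, threshold $\epsilon t/(4(\kappa_0+\kappa_1))$, $\sqrt{128}=8\sqrt{2}$) checks out. The only cosmetic difference is that you merge the two tails via $\kappa_i^2\leq(\kappa_0+\kappa_1)^2$ rather than summing the denominators of the exponents as the paper does in the Wasserstein case; both yield the stated bound.
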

\begin{proof}
Similar as the proof of Theorem~\ref{thm:hoeffding_bound_ot}, except use the bound in \eqref{eq:sinkhorn_bound_rv}.
\end{proof}

\begin{figure*}
    \centering
    \includegraphics[width=1\linewidth]{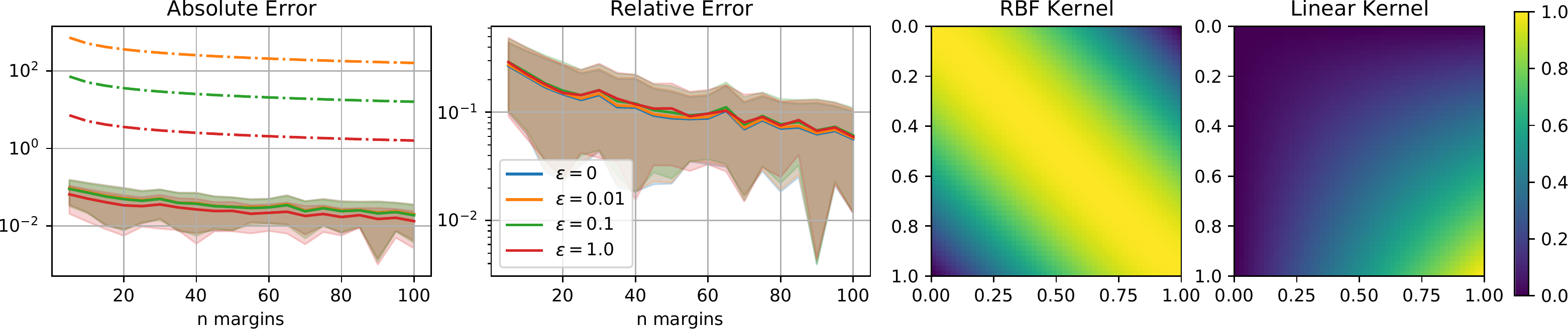}
    \caption{Absolute and relative error for the estimated Sinkhorn divergence between an RBF and linear kernel on $[0,1]$ as the amount of margins is increased. Solid lines represent the average for random sampling (shaded area illustrates $\sigma$-confidence intervals), and dot-dashed lines are upper bounds for the expected errors. The blue curve in the first plot is hidden behind the yellow one.}
    \label{fig:rbs_vs_linear}
\end{figure*}

\begin{figure*}
    \centering
    \includegraphics[width=.9\linewidth]{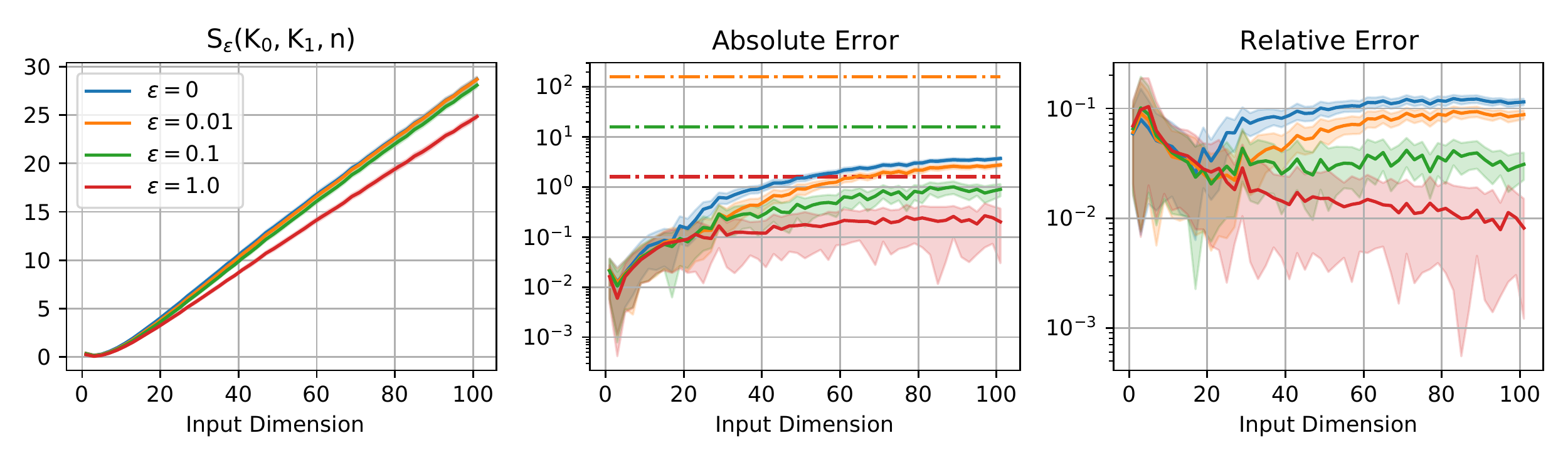}
    \caption{Absolute error and relative error for the estimated Sinkhorn divergence between an RBF and a linear kernel on $[0,1]^d$, where $d$ is the input dimension. The dot-dashed lines are the upper bounds for the expected errors.}
    \label{fig:errors_vs_dim}
\end{figure*}

\section{Experiments}
We now empirically illustrate the convergence by studying the behavior of the $2$-Sinkhorn divergence as we vary the entropic regularization $\epsilon$, amount $n$ of marginals used, and the input dimension of the kernels.

\paragraph{Varying amount of marginals.} We compute the divergence between the radial basis function (RBF) kernel with parameters $(\sigma^2, \lambda)=(1,1)$ and linear kernel on $\X=[0,1]$ with a varying amount of marginals, sampled randomly from $\X$ ($m=50$ samples from the uniform distribution). The error is estimated by computing a 'ground truth' using $n=1000$ uniformly sampled marginals.

The results in Fig.~\ref{fig:rbs_vs_linear} illustrate that as we increase $\epsilon$, the absolute error
\begin{equation}
    \left|S_{2,\epsilon}(K_0,K_1) - S_{2,\epsilon}(K_0,K_1,n)\right|
\end{equation}
goes down as expected from Theorem~\ref{thm:average_bound_sinkhorn}. However, the relative errors
\begin{equation}
    \frac{\left|S_{2,\epsilon}(K_0,K_1) - S_{2,\epsilon}(K_0,K_1,n)\right|}{S_{2,\epsilon}(K_0,K_1)}
\end{equation}
stay almost identical. This is perhaps a bit disappointing, as one might hope $\epsilon$ to increase the relative accuracy too.

\paragraph{Varying input dimension.} In this experiment, we demonstrate the effect of the input dimension (dimensionality of $\X$) on the Sinkhorn divergence. We again consider the RBF kernel, with unit variances and length scales, and the linear kernel. For each dimension, we sample $n=100$ marginals $m=50$ times, and compare the resulting divergences to a 'ground truth' computed with $n=1000$ samples.

As can be seen in Fig.~\ref{fig:errors_vs_dim}, increasing the input dimension also increases the divergence between the kernels, seemingly in an unbouded manner. However, the absolute errors still remain bounded below the theoretical average error bounds, with increasing $\epsilon$ reducing the error. The interesting part is the behavior of the relative error as the dimension increases, as now we witness a reduction as we increase $\epsilon$, in contrast to the experiment with increasing marginals.  

\section{Conclusion}
We have shown that the expected error of estimating the $2$-Sinkhorn divergence between GPs using $n$ marginals behaves as $\mathcal{O}\left(\epsilon^{-1}n^{-\frac{1}{2}}\right)$, implying that entropic regularization helps estimating the divergence. However, as $\epsilon$ increases, the Sinkhorn divergence converges to MMD, which is zero between covariance operators, and thus the divergence keeps decreasing. Therefore it is not surprising, that also the absolute errors should decrease. 

Therefore the interesting quantity is the relative error of the estimate, which entropic regularization seems to help only as we increase the input dimension. This aligns well with what is known about OT and MMD: OT suffers from the curse of dimensionality, where as MMD benefits from dimension-free convergence rates. Notably, the error rates provided in the entropy-regularize case (Theorems~\ref{thm:average_bound_sinkhorn} and \ref{thm:average_bound_ot}) are dimension-free, and the demonstrations imply that the larger the regularization, the better the behavior with respect to input dimension.

\section*{Acknowledgements}
The author would like to thank Augusto Gerolin and H\`a Quang Minh for their comments and feedback on the project. This work was supported by Academy of Finland (Flagship programme: Finnish Center for Artificial Intelligence FCAI, Grant 328400). Aalto Science-IT project is acknowledged for the computational resoruces provided.

\bibliography{references}
\bibliographystyle{icml2021}

\end{document}